\newcommand{\Paragraph}[1]{\vspace{1mm} \noindent \sffamily \textbf{#1}.}
\newcommand{\Eu}[1]{\ensuremath{\EuScript{#1}}}
\newcommand{\E}{\ensuremath{\mathsf{E}}}
\newcommand{\R}{\ensuremath{\mathbb{R}}}
\newcommand{\D}{\ensuremath{\Eu{D}}}
\newcommand{\K}{\ensuremath{\Eu{K}}}
\newcommand{\RR}{\ensuremath{\Eu{R}}}
\newcommand{\eps}{\varepsilon}
\newcommand{\argmax}{\ensuremath{\mathrm{argmax}}}
\newcommand{\argmin}{\ensuremath{\mathrm{argmin}}}
\newcommand{\dir}[1]{\ensuremath{\mathsf{d}#1}}
\newcommand{\Ber}{\textsf{Be}}
\newcommand{\Poi}{\textsf{P}}
\newcommand{\Gau}{\textsf{G}}
  \providecommand\BibTeX{{%
    \normalfont B\kern-0.5em{\scshape i\kern-0.25em b}\kern-0.8em\TeX}}}
\begin{document}

\title{The Kernel Spatial Scan Statistic}

\author{Mingxuan Han}
\authornote{Both authors contributed equally to this research.}
\email{u1209601@umail.utah.edu}
\affiliation{%
  \institution{University of Utah}
  \streetaddress{201 Presidents Circle}
  \postcode{84112}
}
\author{Michael Matheny}
\authornotemark[1]
\email{mmath@cs.utah.edu}
\affiliation{%
  \institution{University of Utah}
  \streetaddress{201 Presidents Circle}
  \postcode{84112}
}

\author{Jeff M. Phillips}
\email{jeffp@cs.utah.edu}
\affiliation{%
 \institution{University of Utah}
  \streetaddress{201 Presidents Circle}
  \postcode{84112}
}



\begin{abstract}

Kulldorff's (1997) seminal paper on spatial scan statistics (SSS) has led to many methods considering different regions of interest, different statistical models, and different approximations while also having numerous applications in epidemiology, environmental monitoring, and homeland security. SSS provides a way to rigorously test for the existence of an anomaly and provide statistical guarantees as to how "anomalous" that anomaly is. However, these methods rely on defining specific regions where the spatial information a point contributes is limited to  binary 0 or 1, of either inside or outside the region, while in reality anomalies will tend to follow smooth distributions with decaying density further from an epicenter.  
%
In this work, we propose a method that addresses this shortcoming through a continuous scan statistic that generalizes SSS by allowing the point contribution to be defined by a kernel.  We provide extensive experimental and theoretical results that shows our methods can be computed efficiently while providing high statistical power for detecting anomalous regions.

\end{abstract}


\keywords{}


\maketitle

\section{Introduction}
We propose a generalized version of spatial scan statistics called the \emph{kernel spatial scan statistic}.  
In contrast to the many variants~\cite{AMPVW06, Kul97, huang2007spatial, jung2007spatial, neill2006bayesian, jung2010spatial, kulldorff2009scan, FNN17,HMWN18, SSMN16} of this classic method for geographic information sciences, the kernel version allows for the modeling of a gradually dampening of an anomalous event as a data point becomes further from the epicenter.  As we will see, this modeling change allows for more statistical power \emph{and} faster algorithms (independent of data set size), in addition to the more realistic modeling.  

\begin{figure}  
\vspace{-1mm}
\includegraphics[width=\linewidth]{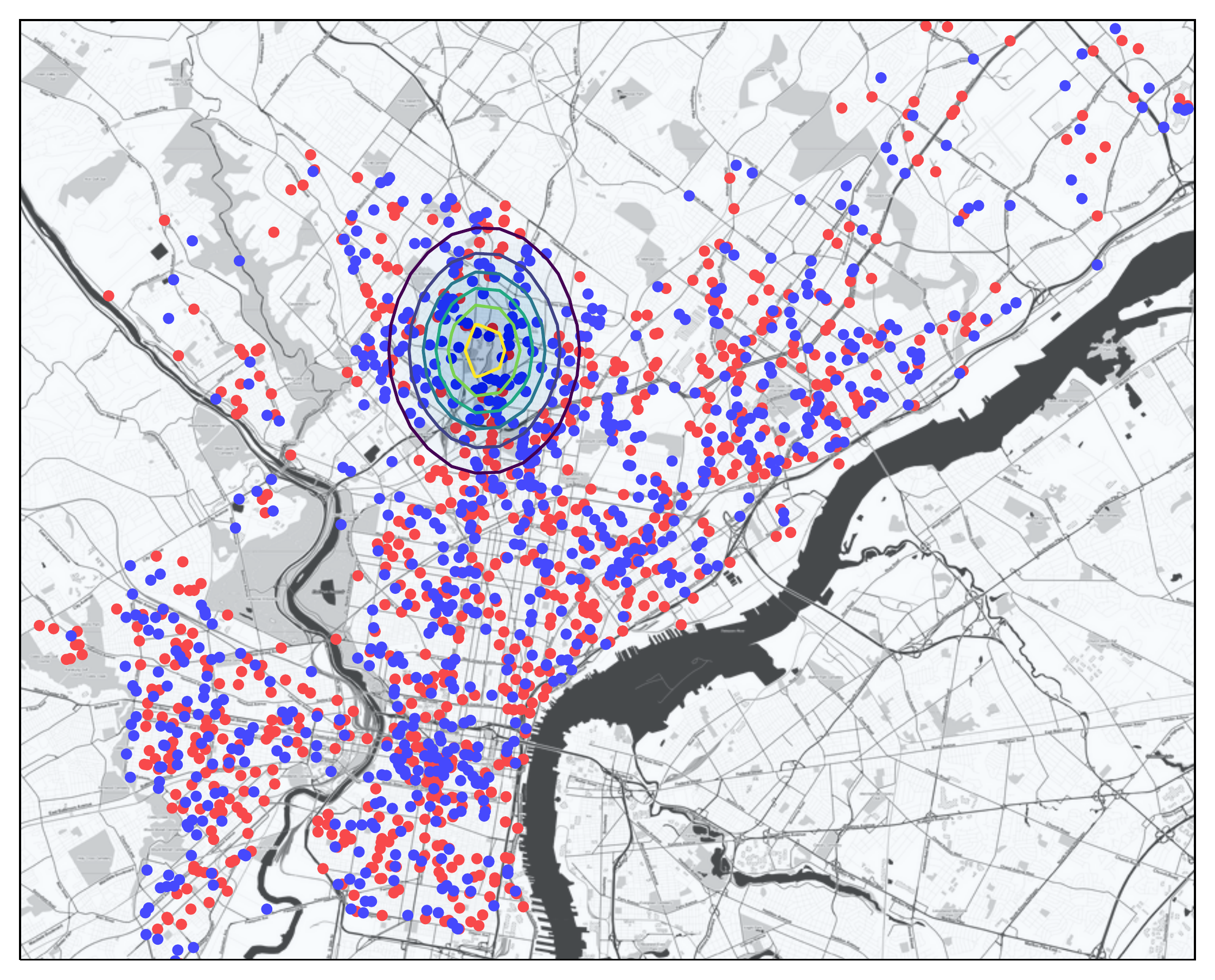}

\vspace{-3mm}
\caption{\label{fig:Bernoulli-model}
An anomaly affecting $8\%$ of data (and rate parameters $p=.9$, $q=.5$) under the Bernoulli Kernel Spatial Scan Statistic model on geo-locations of crimes in Philadelphia.}
\vspace{-2mm}
\end{figure}

To review, spatial scan statistics consider a baseline or population data set $B$ where each point $x \in B$ has an annotated value $m(x)$.  In the simplest case, this value is binary ($1$ = person has cancer; $0$ = person does not have cancer), and it is useful to define the measured set $M \subset B$ as $M = \{x \in X \mid m(x) = 1\}$.  
Then the typical goal is to identify a region where there are significantly more measured points than one would expect from the baseline data $B$.  
To prevent overfitting (e.g., gerrymandering), the typical formulation fixes a set of potential anomalous regions $\RR$ induced by a family of geometric shapes: disks~\cite{Kul97}, axis-aligned rectangles~\cite{NM04}, ellipses~\cite{Kul7.0}, halfspaces~\cite{MP18a}, and others~\cite{Kul7.0}.  
Then given a statistical discrepancy function $\Phi(R) = \phi(R(M),R(B))$, where typically $R(M) = \frac{|R \cap M|}{|M|}$ and $R(B) = \frac{|R \cap B|}{|B|}$, the spatial scan statistic SSS is 
\[
\Phi^*  = \max_{R \in \RR} \Phi(R).
\]
And the hypothesized anomalous region is $R^* = \argmax_{R \in \RR} \Phi(R)$ so $\Phi^* = \Phi(R^*)$.  
Conveniently, by choosing a fixed set of shapes, and having a fixed baseline set $B$, this actually combinatorially limits the set of all possible regions that can be considered (when computing the $\max$ operator), since for instance, there can be only $O(|B|^3)$ distinct disks which each contains a different subset of points.  This allows for tractable~\cite{AMPVW06} (and in some cases very scalable~\cite{MP18b}) combinatorial and statistical algorithms which can (approximately) search over the class of \emph{all} shapes from that family.  
Alternatively, the most popular software, SatScan~\cite{Kul7.0} uses a fixed center set of possible epicenters of events, for simpler and more scalable algorithms.  

However, the discreteness of these models has a strange modeling side effect.  Consider a the shape model of disks $\D$, where each disk $D \in \D$ is defined $D = \{x \in \R^d \mid \|x-c\| \leq r\}$ by a center $c \in \R^d$ and a radius $r > 0$.  Then solving a spatial scan statistic over this family $\D$ would yield an anomaly defined by a disk $D$; that is, all points $x \in B \cap D$ are counted entirely inside the anomalous region, and all points $x' \in B \setminus D$ are considered entirely outside the anomalous region.  
If this region is modeling a regional event; say the area around a potentially hazardous chemical leak suspected of causing cancer in nearby residents, then the hope is that the center $c$ identifies the location of the leak, and $r$ determines the radius of impact.  
However, this implies that data points $x \in B$ very close to the epicenter $c$ are affected equally likely as those a distance of almost but not quite $r$ away.  And those data points $x' \in B$ that are slightly further than $r$ away from $c$ are not affected at all.  
In reality, the data points closest to the epicenter should be more likely to be affected than those further away, even if they are within some radius $r$, and data points just beyond some radius should still have some, but a lessened effect as well.

\Paragraph{Introducing the Kernel Spatial Scan Statistic}
The main modeling change of the kernel spatial scan statistic (KSSS) is to prescribe these diminishing effects of spatial anomalies as data points become further from the epicenter of a proposed event.  From a modeling perspective, given the way we described the problem above, the generalization is quite natural: we simply replace the shape class $\RR$ (e.g., the family of all disks $\D$) with a class of non-binary continuous functions $\K$.  The most natural choice (which we focus on) are kernels, and in particular Gaussian kernels.  We define each $K \in \K$ by a center $c$ and a bandwidth $r$ as $K(x) = \exp(-\|x-c\|^2/r^2).$  This provides a real value $K(x) \in [0,1]$, in fact a probability, for each $x \in B$.  
We interpret this as: given an anomaly model $K$, then for each $x \in B$ the value $K(x)$ is the probability that the rate of the measured event (chance that a person gets cancer) is increased.  

\Paragraph{Related Work on SSS and Kernels} There have been many papers on computing various range spaces for SSS \cite{NM04,Tango2005,ACTW18, ICDMS2015} where a geometric region defines the set of points that included in the region for various regions such as disks, ellipses, rings, and rectangles. Other work has combined SSS and kernels as a way to penalize far away points, but still used binary regions, and only over a set of predefined starting points~\cite{DA04, DCTB2007, Patil2004}.   
Another method~\cite{FNN17} uses a Kernel SVM boundary to define a region; this provides a regularized, but otherwise very flexible class of regions -- but they are still binary. 
A third method~\cite{JRSA89}, proposes an inhomogeneous Poisson process model for the spatial relationship between a measured cancer rate and exposure to single specified region (from industrial pollution source). 
This models the measured rate similar to our work, but does not search over a family of regions, and does not model a background rate.  


\Paragraph{Our contributions, and their challenges}
We formally derive and discuss in more depth the KSSS in Section \ref{sec:model}, and contrast this with related work on SSS. 
While the above intuition is (we believe) quite natural, and seems rather direct, a complication arises: the contribution of the data points towards the statistical discrepancy function (derived as a log-likelihood ratio) are no longer independent.  This implies that $K(B)$ and $K(M)$ can no longer in general be scalar values (as they were with $R(B)$ and $R(M)$); instead we need to pass in sets.  Moreover, this means that unlike with traditional binary ranges, the value of $\Phi$ no longer in general has a closed form; in particular the optimal rate parameters in the alternative hypothesis do not have a closed form.  We circumvent this by describing a simple convex cost function for the rate parameters.  And it turns out, these can then be effectively solved for with a few steps of gradient descent for each potential choice of $K$ within the main scanning algorithm.  

Our paper then focuses on the most intuitive Bernoulli model for how measured values are generated, but the procedures we derive will apply similar to the Poisson and Gaussian models we also derive.  For instance, it turns out that the Gaussian model kernel statistical discrepancy function has a closed form.

The second major challenge is that there is no longer a combinatorial limit on the number of distinct ranges to consider.  There are an infinite set of potential centers $c \in \R^d$ to consider, even with a fixed bandwidth, and each could correspond to a different $\Phi(K)$ value.  However, there is a Lipschitz property on $\Phi(K)$ as a function of the choice of center $c$; that is if we change $c$ to $c'$ by a small amount, then we can upperbound the change in $\Phi(K_c)$ to $\Phi(K_{c'})$ by a linear function of $\|c-c'\|$.  
This implies a finite resolution needed to consider on the set of center points: we can lay down a fixed resolution grid, and only consider those grid points.  Notably: \emph{this property does not hold for the combinatorial SSS version}, as a direct effect of the problematic boundary issue of the binary ranges.  

We combine the insight of this Lipschitz property, and the gradient descent to evaluate $\Phi(K_c)$ for a set of center points $c$, in a new algorithm \textsf{KernelGrid}.  We can next develop two improvements to this basic algorithm which make the grid adaptive in resolution, and round the effect of points far from the epicenter; embodied in our algorithm \textsc{KernelFast}, these considerably increase the efficiency of computing the statistic (by $30$x) without significantly decreasing their accuracy (with provable guarantees).  
Moreover, we create a coreset $B_\eps$ of the full data set $B$, independent of the original size $|B|$ that provably bounds the worst case error $\eps$.  

We empirically demonstrate the efficiency, scalability, and accuracy of these new KSSS algorithms.  In particular, we show the KSSS has superior statistical power compared to traditional SSS algorithms, and exceeds the efficiency of even the heavily optimized version of those combinatorial Disk SSS algorithms.



\section{Derivation of the Kernel Spatial Scan Statistic}
\label{sec:model}
In this section we will provide a general definition for a spatial scan statistic, and then extend this to the kernel version. 
It turns out, there are two reasonable variations of such statistics, which we call the continuous and binary settings.  
In each case, we will then define the associated kernelized statistical discrepancy function $\Phi$ under the Bernoulli $\Phi^{\text{Be}}$, Poisson $\Phi^P$, and Gaussian $\Phi^G$ models.    These settings are the same in the Bernoulli model, but different in the other two models.

\Paragraph{General derivation}
A spatial scan statistic considers a spatial data set $B \subset \R^d$, each data point $x \in B$ endowed with a measured value $m(x)$, and a family of measurement regions $\RR$.  Each region $R \in \RR$ specifies the way a data point $x \in B$ is associated with the anomaly (e.g., affected or not affected).  
Then given a statistical discrepancy function $\Phi$ which measures the anomalousness of a region $R$, the statistic is $\max_{R \in \RR} \Phi(R)$.  
To complete the definition, we need to specify $\RR$ and $\Phi$, which it turns out in the way we define $\RR$ are more intertwined that previously realized.  

To define $\Phi$ we assume a statistical model in how the values $m(x)$ are realized, where data points $x$ affected by the anomaly have $m(x)$ generated at rate $p$ and those unaffected generated at rate $q$. 


Then we can define a null hypothesis that a potential anomalous region $R$ has no effect on the rate parameters so $p=q$; and an alternative hypothesis that the region does have an effect and (w.l.o.g.) $p > q$.

For both the null and alternative hypothesis, and a region $R \in \RR$, we can then define a likelihood, denoted $L_0(q)$ and $L(p,q,R)$, respectively.  The spatial scan statistic is then the log-likelihood ratio (LRT)
\[
\Phi(R)  = \log\left( \frac{\max_{p,q} L(p,q,R)}{\max_q L_0(q)} \right) =  \left(\max_{p,q} \log L(p,q,R)\right) - \left(\max_q \log L_0(q)\right).
\]

Now the main distinction with the kernel spatial scan statistic is that $\RR$ is specified with a family of kernels $\K$ so that each $K \in \K$ specifies a \emph{probability} $K(x)$ that $x \in B$ is affected by the anomaly.  This is consistent with the traditional spatial scan statistic (e.g., with $\RR$ as disks $\D$), where this probability was always $0$ or $1$.  Now this probability can be any continuous value. Then we can express the mean rate/intensity $g(x)$ for each of the underlying distributions from which $m(x)$ is generated, as a function of $K(x)$, $p$, and $q$. 
Two natural and distinct models arise for kernel spatial scan statistics. 

\Paragraph{Continuous Setting} 
In the \emph{continuous setting}, which will be our default model, we directly model the mean rate $g(x)$ as a convex combination between $p$ and $q$ as, 
    \[ g(x) = K(x)p + (1 - K(x))q.  \]
Thus each $x$ (with nonzero $K(x)$ value) has a slightly different rate.  
Consider a potentially hazardous chemical leak suspected of causing cancer, this model setting implies that the residents who live closer to the center ($K(x)$ is larger) of potential leak would be affected more (have elevated rate) compared to the residents who live farther away.  
The kernel function $K(x)$ models a decay effect from a center, and smooths out the effect of distance.

\Paragraph{Binary Setting} 
In the second setting, the \emph{binary setting}, the mean rate $g(x)$ is defined 
\[
\breve g(x)= \begin{cases}
 p & \text{ w.p. } K(x)  
 \\ 
 q & \text{ w.p. }(1 - K(x)).
\end{cases}
\] 
To clarify notation, each part of the model associated with this setting (e.g., $\breve g$) will carry a $\breve{}$\,  to distinguish it from the continuous setting.  
In this case, as with the traditional SSS, the rate parameter for each $x$ is either $p$ or $q$, and cannot take any other value.  However, this rate assignment is not deterministic, it is assigned with probability $K(x)$, so points closer to the epicenter (larger $K(x)$) have higher probability of being assigned a rate $p$.  The rate $\breve g(x)$ for each $x$ is a mixture model with known mixture weight determined by $K(x)$.

\Paragraph{The null models}
We show that the choice of model, binary setting, vs continuous setting, does not change the null hypothesis (e.g.  $\ell_0 = \breve \ell_0$).  


\subsection{Bernoulli}
Under the Bernoulli model, the measured value $m(x) \in \{0,1\}$, and these values are generated independently. 
Consider a $1$ value indicating that someone is diagnosed with cancer.  Then an anomalous region may be associated with a leaky chemical plant, where the residents nearby the plant have an elevated rate of cancer $p$, whereas the background population may have lower rate $q$ of cancer.  That is, the cancer occurs through natural mutation at a rate $q$, but if exposed to certain chemicals, there is another mechanism to get cancer that occurs at rate $p-q$ (for a total rate of $q + (p-q) = p$).  
Under the binary model, \emph{any} exposure to the chemical triggers this secondary mechanism, and so the chance of exposure is modeled as proportional to $K(x)$, and rate at $x$ is $\breve g(x)$ is modeled well by the binary setting.  
Alternatively, the rate of the secondary mechanism may increase as the amount of exposure to the chemical increases (those living closer are exposed to more chemicals), with rate $g(x)$ modeled in the continuous setting.  These are both potentially the correct biological model, so we analyze both of them. 

For this model we can define two subsets of $B$ as $M = \{x \in B \mid m(X) =1\}$ and $B \setminus M = \{x \in B \mid m(x) = 0\}$. 

In either setting the null likelihood is defined
\[L_0^\Ber(q) =  \breve L_0^\Ber(q) = \prod_{x \in M} q \prod_{x \in B \setminus M} (1 - q), \] 
then 
\[
  \ell_0^\Ber(q) = \breve \ell_0^\Ber(q) = \sum_{x \in M} \log(q) + \sum_{x \in B \setminus M} \log(1-q),
\]
which is maximized over $q$ at $q = |M|/|B|$ as, 
\[\ell_0^{\Ber^*} = \breve \ell_0^{\Ber^*} = |M| \log\frac{|M|}{|B|} + (|B| - |M|)\log(1 - \frac{|M|}{|B|}). \]

\Paragraph{The continuous setting}
We first deriving the continuous setting $\Phi^\Ber$, starting with the likelihood under the alternative hypothesis. 

This is a product of the rate of measured and baseline. 
\[
L^\Ber(p,q,K) 
=
\prod_{x \in M} g(x) \cdot \prod_{x \in B \setminus M} (1-g(x))
\]
and so 
\begin{align*}
&\ell^\Ber(p,q,K) 
\\&= 
\log(L^\Ber(p,q,K)) 
=
\sum_{x \in M} \log g(x) + \sum_{x \in B \setminus M} \log (1-g(x))
\\ &=
\sum_{x \in M} \log(p K(x) + q (1-K(x)))  + \sum_{x \in B \setminus M} \log(1-p K(x) - q(1-K(x))).
\end{align*}
Unfortunately, we know of no closed form for the maximum of $\ell^\Ber(p,q,K)$ over the choice of $p,q$ and therefore this form cannot be simplified further than
\[
{\Phi^\Ber}^*(K) = \max_{p,q} \ell^\Ber(p,q,K) - {\ell_0^\Ber}^*
\]

\Paragraph{The binary setting}
We continue deriving the binary setting $\breve{\Phi}^\Ber$, starting with 
\[
\breve L^\Ber(p,q,K) 
=
\prod_{x \in B} \left( p^{m(x)}(1-p)^{m(x)}K(x) + q^{m(x)}(1-q)^{m(x)}K(x)\right),
\]
and so 
\begin{align*}
\breve \ell^\Ber(p,q,K) 
&= 
\log(\breve L^\Ber(p,q,K)) 
\\ &=
\sum_{x \in B} \log \left( p^{m(x)}(1-p)^{m(x)}K(x) + q^{m(x)}(1-q)^{m(x)}K(x) \right).
\end{align*}

Similarly as in the continuous setting, there is no closed form of the maximum of $\log(\breve L^\Ber(p,q,K))$ over choices of $p$,$q$, so we write the form below.
\[
\breve{\Phi}^{\Ber^*}(K) = \max_{p,q} \breve \ell^\Ber(p,q,K)  - {\ell_0^\Ber}^* 
\]

\Paragraph{Equivalence}
It turns out, under the Bernoulli model, these two settings have equivalent statistics to optimize.  

\begin{lemma}
The $\breve \ell^\Ber(p,q,K)$ and $\ell^\Ber(p,q,K)$ are exactly same, hence $\breve \Phi^{\Ber^*}$ = ${\Phi^\Ber}^*$ which implies that the Bernoulli model under two settings are equivalent to each other. 
\end{lemma}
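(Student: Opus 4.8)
The plan is to prove the stronger pointwise statement that $\breve \ell^\Ber(p,q,K) = \ell^\Ber(p,q,K)$ for \emph{every} choice of $p$, $q$, and $K$; the equality of the two discrepancy functions then follows immediately. Indeed, once the two objective functions are identical as functions of $(p,q)$, their maxima over $(p,q)$ coincide, and since the null terms already agree ($\ell_0^{\Ber^*} = \breve \ell_0^{\Ber^*}$, established above in ``The null models''), subtracting this common term gives ${\Phi^\Ber}^* = \breve \Phi^{\Ber^*}$.

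To establish the pointwise identity, I would work factor by factor in the product defining $\breve L^\Ber(p,q,K)$, splitting the single sum $\sum_{x \in B}$ according to whether $m(x) = 1$ (i.e.\ $x \in M$) or $m(x) = 0$ (i.e.\ $x \in B \setminus M$). For $x \in M$ the generic mixture factor collapses to $K(x) p + (1 - K(x)) q$, which is exactly $g(x)$, the continuous-setting rate. For $x \in B \setminus M$ the factor becomes $K(x)(1 - p) + (1 - K(x))(1 - q)$; the key algebraic step is to recognize the identity $K(x)(1-p) + (1-K(x))(1-q) = 1 - \bigl(K(x) p + (1-K(x)) q\bigr) = 1 - g(x)$. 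Taking logarithms and summing then reproduces, term by term, the two sums $\sum_{x \in M} \log g(x)$ and $\sum_{x \in B \setminus M} \log(1 - g(x))$ that together define $\ell^\Ber(p,q,K)$.

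The real content is therefore a single elementary simplification, so I do not anticipate a genuine obstacle. The one place to be careful is the bookkeeping of the exponents $m(x)$ and $1 - m(x)$ in the mixture factor: one must verify that substituting the two admissible values $m(x) \in \{0,1\}$ does in fact produce the two cases above, and in particular that the $x \in B \setminus M$ case telescopes to $1 - g(x)$ rather than to some unrelated expression. Once that verification is done, the chain $\breve \ell^\Ber = \ell^\Ber \Rightarrow \max_{p,q}\breve \ell^\Ber = \max_{p,q}\ell^\Ber \Rightarrow \breve \Phi^{\Ber^*} = {\Phi^\Ber}^*$ follows automatically.
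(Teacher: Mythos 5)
Your proposal is correct and follows essentially the same route as the paper's own proof: expand the mixture factor, split the sum over $x \in M$ and $x \in B \setminus M$, and observe that the two cases collapse to $g(x)$ and $1-g(x)$ respectively, after which the equality of the maxima and hence of $\breve \Phi^{\Ber^*}$ and ${\Phi^\Ber}^*$ is immediate. The one bookkeeping point you flag (verifying the $1-m(x)$ exponents and the $1-K(x)$ mixture weight) is exactly the step the paper carries out, so there is nothing further to add.
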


\begin{proof}  
We simply expand the binary setting as follows.  
\begin{align*}
& \breve \ell^\textsf{Be}(p,q,K) 
\\ &= 
\sum_{x \in M} \log( p^{m(x)} (1 - p)^{1 - m(x)} K(x) + q^{m(x)} (1 - q)^{1 - m(x)} (1 - K(x))) 
\\ &
 + \sum_{x \in B \setminus M} \log( p^{m(x)} (1 - p)^{1 - m(x)} K(x) + q^{m(x)} (1 - q)^{1 - m(x)} (1 - K(x)))
\\ & =  
\sum_{x \in M} \log( p K(x) + q(1 - K(x))) +  \sum_{x \in B\setminus M} \log(1 - (p-q)K(x) - q) 
\\ & = 
\sum_{x \in M} \log(g(x)) +  \sum_{x \in B\setminus M} \log(1 - g(x)) 
\\ & =  
\ell^\textsf{Be}(p,q,K) 
\end{align*}
Since the $\breve \ell^\Ber(p,q,K) = \ell^\Ber(p,q,K)$ and $\breve \ell_0^{\Ber^*} = {\ell_0^\Ber}^*$, then $\breve \Phi^{\Ber^*}$ = ${\Phi^\Ber}^*$. 
\end{proof}

\subsection{Gaussian} 
The Gaussian model can be used to analyze spatial datasets with continuous values $m(x)$ (e.g., temperature, rainfall, or income), which we assume varies with a normal distribution with a fixed known standard deviation $\sigma$.  Under this model, both the continuous and binary settings are again both well motivated.  

Consider an insect infestation, as it affects agriculture.  Here we assume fields of crops are measured at discrete locations $B$, and each has an observed yield rate $m(x)$, which under the null models varies normally around a value $q$.  In the continuous setting, the yield rate at $x \in B$ is effected proportional to $K(x)$, depending on how close it is to the epicenter.  This may for instance model that fewer insects reach further from the epicenter, and the yield rate is effected relative to the number of insects that reach the field $x$.  
In the binary setting, it may be that if insects reach a field, then they dramatically change the yield rate (e.g., they eat and propagate until almost all of the crops are eaten).  In the latter scenario, the correct model is binary one, with a mixture model of two rates governed by $K(x)$, the closeness to the epicenter.  



In either setting the null likelihood is defined as, 
\[L_0^\Gau(q) =  \breve L_0^\Gau(q) = \prod_{x \in B} \exp(-\frac{(m(x) -q)^2}{2 \sigma^2}), \] 
then
\[\ell_0^\Gau(q) =  \breve \ell_0^\Gau(q) = \sum_{x \in B} -\frac{(m(x) -q)^2}{2 \sigma^2}, \] 
which is maximized over $q$ at $q = \frac{\sum_{x \in B} m(x)}{|B|} = \hat m$ as, 
\[\ell_0^{\Gau^*} = \breve \ell_0^{\Gau^*} = \frac{-1}{2 \sigma^2} \sum_{x \in B}(\hat m - m(x))^2. \]

\Paragraph{The continuous setting} 
We first derive the continuous setting $\Phi^\Gau$ and terms free of $p$ and $q$ would be treated as constant then ignored, starting with, 
\[
L^\Gau(p,q,K) 
=
\prod_{x \in B} \exp(-\frac{(m(x) - g(x))^2}{2\sigma^2})
\]
and so 
\begin{align*} 
\ell^\Gau(p,q,K) 
&= 
\log(L^\Gau(p,q,K))  = \sum_{x \in B} -\frac{(m(x) - g(x))^2}{2\sigma^2} \\ &= 
\sum_{x \in B} -\frac{(m(x) - pk(x) - q(1 - K(x))^2}{2\sigma^2}.
\end{align*}

Fortunately, there is a closed form for the maximum of $\ell^\Gau(p,q,K)$ over the choice of $p,q$ by setting the $\frac{\dir \ell^\Gau(p,q,K)}{\dir p} = 0$ and $\frac{\dir \ell^\Gau(p,q,K)}{\dir q} = 0$. Hence we come up with the closed form solution of Gaussian Kernel statistical discrepancy shown by the theorem below. 

\begin{theorem}
\label{thm:GKSSS}
Gaussian kernel statistical discrepancy function is
\[
\Phi^{\Gau^*}(K) = \sum_{x \in B} (m(x) - \hat p K(x) - \hat q (1-K(x)))^2 - \sum_{x\in B}(\bar m - m(x))^2
\]
where 
$\bar m = \frac{1}{|B|} \sum_{x \in B} m(x) $, 
$\hat p = \frac{K_{\pm}K_{-m} - K_m K_{-2}}{K_{\pm}^2 - K_2 K_{-2}}$,  
$\hat q = \frac{K_m K_{\pm} - K_2 K_{-m}}{K_{\pm}^2 - K_2 K_{-2}}$,  using the following terms 
$K_m = \sum_{x \in B} K(x) m_{x}$, \;
$K_2 = \sum_{x \in B} K(x)^2$, \;\;
$K_{\pm} = \sum_{x \in B} K(x) (1 - K(x))$, 
$K_{-m} = \sum_{x \in B} m(x) (1 - K(x))$, and 
$K_{-2} = \sum_{x \in B} (1 - K(x))^{2}$.
\end{theorem}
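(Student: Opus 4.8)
The plan is to observe that, since the factor $-1/(2\sigma^2)$ is a positive constant independent of $p$ and $q$, maximizing $\ell^\Gau(p,q,K)$ is equivalent to minimizing the residual sum $f(p,q)=\sum_{x\in B}\bigl(m(x)-pK(x)-q(1-K(x))\bigr)^2$. Because $g(x)=pK(x)+q(1-K(x))$ is linear in the pair $(p,q)$, the map $f$ is a convex quadratic, so its unique minimizer is obtained by setting both first-order partial derivatives to zero. This turns the theorem into a standard two-variable linear least-squares computation.

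First I would write down the two first-order conditions. Differentiating $f$ and dropping the common factor of $2$ yields the normal equations
\begin{align*}
\sum_{x\in B} K(x)\bigl(m(x)-pK(x)-q(1-K(x))\bigr) &= 0,\\
\sum_{x\in B} (1-K(x))\bigl(m(x)-pK(x)-q(1-K(x))\bigr) &= 0.
\end{align*}
Expanding each sum and grouping the coefficients in terms of the five aggregates $K_m,K_2,K_{\pm},K_{-m},K_{-2}$ rewrites this as the symmetric $2\times2$ system
\[
\begin{pmatrix} K_2 & K_{\pm} \\ K_{\pm} & K_{-2} \end{pmatrix}
\begin{pmatrix} p \\ q \end{pmatrix}
=
\begin{pmatrix} K_m \\ K_{-m} \end{pmatrix}.
\]

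Next I would solve this system by Cramer's rule. The coefficient matrix is the Gram matrix of the vectors $(K(x))_{x\in B}$ and $(1-K(x))_{x\in B}$, hence positive semidefinite with determinant $K_2K_{-2}-K_{\pm}^2$; by Cauchy--Schwarz this determinant vanishes only when the two vectors are linearly dependent, i.e.\ when $K$ is constant on $B$, so for any nonconstant kernel it is strictly positive and the critical point is the global minimizer. Cramer's rule then gives $\hat p=(K_mK_{-2}-K_{\pm}K_{-m})/(K_2K_{-2}-K_{\pm}^2)$ and $\hat q=(K_2K_{-m}-K_mK_{\pm})/(K_2K_{-2}-K_{\pm}^2)$, which coincide with the stated $\hat p,\hat q$ after negating numerator and denominator to match the written denominator $K_{\pm}^2-K_2K_{-2}$. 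Finally, substituting $(\hat p,\hat q)$ into $\ell^\Gau$ and subtracting the null optimum $\ell_0^{\Gau^*}$ (whose maximizer is $q=\hat m=\bar m$, as derived just above) leaves, up to the positive constant $1/(2\sigma^2)$, the difference between the null residual sum $\sum_{x\in B}(\bar m-m(x))^2$ and the fitted residual sum $\sum_{x\in B}(m(x)-\hat pK(x)-\hat q(1-K(x)))^2$ claimed in the theorem.

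I do not expect a genuine obstacle: the argument is a convex quadratic minimization whose only delicate points are the bookkeeping that expands the normal equations into the five $K$-aggregates and the tracking of sign conventions so that the closed form lines up with the stated one. The sole structural hypothesis worth flagging is nondegeneracy of the Gram determinant, equivalently that $K$ is nonconstant on $B$, which guarantees the solution $(\hat p,\hat q)$ is both well-defined and the true maximizer.
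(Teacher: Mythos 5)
Your proposal is correct and follows essentially the same route as the paper: both reduce the problem to minimizing the residual sum $\sum_{x\in B}(m(x)-pK(x)-q(1-K(x)))^2$, derive the same two normal equations $K_m - pK_2 - qK_{\pm}=0$ and $K_{-m}-pK_{\pm}-qK_{-2}=0$, and solve the resulting $2\times 2$ linear system for $\hat p,\hat q$. Your explicit convexity justification and the nondegeneracy caveat (that the Gram determinant $K_2K_{-2}-K_{\pm}^2$ is positive unless $K$ is constant on $B$) are welcome refinements the paper leaves implicit, but they do not change the argument.
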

\begin{proof}

For the alternative hypothesis, the log-likelihood is
\[
\ell^\Gau(p,q,K) = \frac{-1}{2\sigma^2} \sum_{x \in B} (m(x) - g(x))^2
\]
The optimal values of $p,q$ minimize
\[
-\ell^\Gau(p,q,K)  
= 
\frac{1}{2\sigma^2} \sum_{x \in B} (m(x) - pK(x) - q(1-K(x)))^2.  
\]

By setting the partial derivatives wrt $p$ and $q$ of $-\ell^\Gau(p,q,L)$ equal to $0$, we have, 
\[
\frac{\dir \ell^\Gau(p,q,K)}{\dir p} 
= \sum_{x \in B} K(x)[m(x) - K(x)p - (1 - K(x)q] = 0,
\]
and 
\[ 
\frac{\dir \ell^\Gau(p,q,K)}{\dir q} 
= \sum_{x \in B} (1 - K(x))[m(x) - K(x)p - (1 - K(x)q] = 0,
\]
and these two can be further simplified to,
\begin{align*}
\frac{\dir \ell^\Gau(p,q,K)}{\dir p} 
&= 
\sum_{x \in B} K(x)m(x) - p\sum_{x \in B}K(x)^2 - q\sum_{x \in B}K(x)(1 - K(x)) 
\\&= 
0,
\end{align*}
and
\begin{align*}
&\frac{\dir \ell^\Gau(p,q,K)}{\dir q} 
\\&= \sum_{x \in B} (1 - K(x))m(x) - p\sum_{x \in B}(1 - K(x))K(x) - q\sum_{x \in B}(1 - K(x))^2 \\&= 0.
\end{align*}
We replace these terms by notations defined in the theorem, 
\[
   K_m - pK_{2} - qK_{\pm} = 0,
\]
and 
\[
  K_{-m} - pK_{\pm} - qK_{-2} = 0.
\]
Then we can solve the optimum value of $p$ and $q$ as, 
$p = \frac{K_{\pm}K_{-m} - K_m K_{-2}}{K_{\pm}^2 - K_2 K_{-2}} = \hat p$
and
$q =  \frac{K_m K_{\pm} - K_2 K_{-m}}{K_{\pm}^2 - K_2 K_{-2}} = \hat q$.

Hence we have the closed form
\[
 \ell^{\Gau^*} = \max_{p,q} \ell^\Gau(p,q,K) = \frac{-1}{2\sigma^2}\sum_{x \in B} (m(x) - \hat p K(x) - \hat q (1-K(x)))^2.
\]
\end{proof}

\Paragraph{The binary Setting}
Now we derive the binary setting $\breve{\Phi}^\Ber$, starting 
\[
 \breve L^\Gau(p,q,K) = \prod_{x \in B}  \exp(-\frac{(m(x) - p)^2}{2 \sigma^2})K(x) + \exp(-\frac{(m(x) - q)^2}{2 \sigma^2})(1 - K(x))
\]
and so, 
\begin{align*}
 &\breve \ell^\Gau(p,q,K) \\&= \sum_{x \in B} \log(\exp(-\frac{(m(x) - p)^2}{2 \sigma^2})K(x) + \exp(-\frac{(m(x) - q)^2}{2 \sigma^2})(1 - K(x))).  
\end{align*}

Different from the continuous setting we know of no closed form for the maximum of $\breve \ell^\Gau(p,q,K)$ over $p$ and $q$. Hence,
\[
   \breve{\Phi}^{\Gau^*}(K) = \max_{p,q} \breve \ell^\Gau(p,q,K)  - {\ell_0^\Gau}^*.  
\]

\Paragraph{Equivalence}
Different from the Bernoulli models, the Gaussian models under the binary setting and continuous setting are not equivalent to each other. Under the continuous setting,  
\[ m(x) \sim \mathcal{N}(g(x),\sigma), \]
however, under the binary setting, each data point follow a two components Gaussian mixture model where the mixture weight is given by $K(x)$ and $1 - K(x)$, and so it is not a Gaussian distribution anymore. 





\subsection{Poisson}

In the Poisson model the measured value $m(x)$ is discrete and non-negative, but it can now take any positive integer value with $m(x) \in \mathbb{Z}_+$.  This can for instance model the number of check-ins or comments $m(x)$ posted at each geo-located business $x \in B$ (this can be a proxy for instance for the number of customers).  An event, e.g., a festival, protest, or other large impromptu gathering could be  modeled spatially by a kernel $K$, and it affects the rates at each $x$ in the two different settings.  

In the continuous setting, the closer a distance a restaurant is from the center of the event (modeled by $K(x)$) the more the usual number of check-ins (modeled by $q$) will trend towards $p$.
On the hand, in the binary setting, only certain businesses are affected (e.g., a coffee shop, but not a fancy dinner location), but if it is affected, its rate is elevated all the way from $q$ to $p$.  Perhaps advertising at a festival encouraged people to patronize certain restaurants, or a protest encouraged them to give bad reviewers to certain nearby restaurants -- but not others.  Hence, these two settings relate to two different ways an event could affect Poisson check-in or comment rates.




In either setting, the null likelihood is defined as,
\[L_0^\Poi(q) =  \breve L_0^\Poi(q) = \prod_{x \in B} \frac{e^{-q}q^{m(x)}}{m(x)!}, \] 
then
\begin{align*}
\ell_0^\Poi(q) =  \breve \ell_0^\Poi(q) &= \sum_{x \in B} \log(\frac{e^{-q}q^{m(x)}}{m(x)!}) 
\\& = \sum_{x \in B} -q + m(x)\log(q) - \log(m(x)!),
\end{align*}
which is maximized over $q$ at $q = \frac{\sum_{x \in B} m(x)}{|B|} = \hat m$ as, 
\begin{align*}
\ell_0^{\Poi^*} 
= 
\breve \ell_0^{\Poi^*} 
& = 
\sum_{x \in B} -\hat m + m(x) \log(\hat m) - \log(m(x)!)
\\ &= 
- |B| \hat m - \sum_{x \in B} \log (m(x)!) - m(x) \log (\hat m).
\end{align*}

\Paragraph{The continuous setting}
We first derive the continuous setting $\Phi^\Poi$, starting with, 
\begin{align*}
    L^\Poi(p,q,K) 
    &= \prod_{x \in B} \frac{e^{-g(x)} g(x)^{m(x)}}{m(x)!}
    \\&= \prod_{x \in B} \frac{e^{-pK(x) -q(1- K(x))} (pK(x) + q(1-K(x))^{m(x)}}{m(x)!},
\end{align*}
so 
\begin{align*}
    &\ell^\Poi(p,q,K) 
    \\&= \sum_{x \in B} -g(x) + m(x)\log(g(x)) - \log(m(x)!)
    \\&= \sum_{x \in B} -(pK(x) + q(1-K(x)) + m(x)\log(pK(x) + q(1-K(x)) - \log(m(x)!))
\end{align*}
There is no closed form for the maximum of $\ell^\Poi(p,q,K)$ over the choice of $p$ and $q$ and hence 
\[
    \Phi^{\Poi^*}(K) = \max_{p,q}\ell^\Poi(p,q,K) - \ell_0^{\Poi^*}(q)
\]

\Paragraph{The binary setting}
We continue deriving the binary setting $\breve \Phi^{\Poi^*}$, starting with
\[ 
   \breve L^\Poi(p,q,K) = \prod_{x \in B} \frac{e^{-q} q^{m(x)}}{m(x)!} (1 - K(x)) +  \frac{e^{-p} p^{m(x)}}{m(x)!} K(x),
\]
so
\[ 
   \breve \ell^\Poi(p,q,k) = \sum_{x \in B} \log\left(\frac{e^{-q} q^{m(x)}}{m(x)!} (1 - K(x)) +  \frac{e^{-p} p^{m(x)}}{m(x)!} K(x) \right).  
\]

Same as the continuous setting, there is no closed form of  $\breve \ell^\Poi(p,q,k)$, hence, 

\[ \breve \Phi^{\Poi^*}(K) = \max_{p,q} \breve \ell^\Poi(p,q,k) - \breve \ell_0^{\Poi^*}(q) \]

\Paragraph{Equivalence}
In the Poisson model the binary setting and the continuous setting are not equivalent to each other. Under the continuous setting, 
\[ 
m(x) \sim \mathsf{Poi}(g(x)), 
\]
however under the binary setting $m(x)$ follows a mixture Poisson model which is not a Poisson distribution anymore and the mixture weight is given by $K(x)$ and $1 - K(x)$.

\section{Computing the Approximate KSSS}
\label{sec:algorithm}
The traditional SSS can combinatorially search over all disks~\cite{SSSS,MP18a,Kul97} to solve for or approximate $\max_{D \in \D}\Phi(D)$, evaluating $\Phi(D)$ exactly.  Our new KSSS algorithms will instead search over a grid of possible centers $c$, and approximate $\Phi(K_c)$ with gradient descent, yet will achieve the same sort of strong error guarantees as the combinatorial versions.  Improvements will allow for adaptive gridding, pruning far points, and sampling.  


\subsection{Approximating $\Phi$ with GD}
\label{sec:approx-log-lambda}
We cannot directly calculate $\Phi(K) = \max_{p,q} \Phi_{p,q}(K)$, since it does not have a closed form.  Instead we run gradient descent $\mathsf{GradDesc}$ over $-\Phi_{p,q}(K_c)$ on $p,q$ for a fixed $c$.  Since we have shown $\Phi_{p,q}(K)$ is convex over $p,q$ this will converge, and since Lemma \ref{lem:lipshitz-pq} bounds its Lipschitz constant at $2 |B|$ it will converge quickly.  
In particular, from starting points $p_0,q_0$, after $s$ steps we can bound
\[
|\Phi_{p^*, q^*}(K) - \Phi_{p_s, q_s}(K)| \le \frac{|B| \|(p^*,q^*) - (p_0,q_0)\| }{s},
\]
for the found rate parameters $p_s,q_s$.  Since $0 \leq p,q \leq 1$, then after $s_\eps = |B|/\eps$ steps we are guaranteed to have 
$|\Phi_{p^*, q^*}(K) - \Phi_{p_{s_\eps}, q_{s_\eps}}(K)| \leq \eps$.  
We always initiate this procedure on $\Phi_{p,q}(K_c)$ with the $\hat p, \hat q$ found on a nearby $K_{c'}$, and as a result found that running for $s = 3$ of $4$ steps is sufficient.  
Each step of gradient descent takes $O(|B|)$ to compute the gradient.  
\subsection{Gridding and Pruning}
\label{sec:algorithm-description}

Computing $\Phi(K)$ on every center in $\mathbb{R}^2$ is impossible, but Lemma \ref{lem:spatial-lipshitz} shows that $\Phi(K_{\hat c})$ with $\hat c$ close to the true maximum $c^*$, then $\Phi(K_{\hat c})$ will  approximate the true maximum.  
From Lemma \ref{lem:spatial-lipshitz} we have $|\Phi(K_{\hat{c}}) - \Phi(K_{c^*})| \le \frac{1}{r} \sqrt{\frac{8}{e}} \|\hat{c} - c^*\|$.  To get a bound of $|\Phi(K_{\hat{c}}) - \Phi(K_{c^*})| \le \eps$ we need that $\|\hat{c} - c^*\|\frac{1}{r}\sqrt{\frac{8}{e}} \le \eps$ or rearanging that $\|\hat{c} - c^*\| \le \eps r \sqrt{\frac{e}{8}}$.  Placing center points on a regular grid with sidelength $\tau_\eps = \eps r \sqrt{\frac{e}{4}}$ will ensure 
the a center point will be close enough to the true maximum. 

Assume that $B \subset \Omega_\Lambda \subset \R^2$, which w.l.o.g. we assume $\Omega_\Lambda \in [0,L] \times [0,L]$, where $\Lambda = L/r$ is a unitless resolution parameter which represents the ratio of the domain size to the scale of the anomalies.  
Next define a regular orthogonal grid $G_\eps$ over $\Omega_\Lambda$ at resolution $\tau_\eps$.  This contains $|G_\eps \cap \Omega_\Lambda| = \frac{\Lambda^2}{\eps^2}\frac{4}{e}$ points.  
We compute the scan statistic $\Phi(K_c)$ choose as $c$ each point in $G_\eps \cap \Omega_\Lambda$.  
This algorithm, denoted \textsc{KernelGrid} and shown in Algorithm \ref{alg:D2}, can be seen to run in $O(|G_\eps \cap \Omega_\Lambda| \cdot |B| t_{g} ) = O(\frac{\Lambda^2}{\eps^2} |B| s_\eps)$ time, using $s_\eps$ iterations of gradient decent (in practice $s_\eps = 4$). 

\begin{lemma}
\label{lem:kernel-grid}
$\textsc{\em KernelGrid}(B,\eps,\Omega_\Lambda)$ returns $\Phi(K_{\hat c})$ for a center $\hat c$ 
so
$|\max_{K_c \in \K_r} \Phi(K_c) - \Phi(K_{\hat c})| \le \eps$ 
in time $O(\frac{\Lambda^2}{\eps^2} |B| s_\eps)$, which in the worst case is $O(\frac{\Lambda^2}{\eps^3} |B|^2)$.
\end{lemma}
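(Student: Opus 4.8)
The plan is to prove correctness and running time separately, with correctness reduced to budgeting the total error $\eps$ between the two approximations that \textsc{KernelGrid} makes: the spatial discretization (only grid centers are evaluated) and the gradient-descent evaluation of $\Phi$ at each center (the value computed at a center $c$ is really an approximation $\tilde\Phi(K_c)$ of the true $\Phi(K_c)$). First I would fix notation: let $c^* = \argmax_c \Phi(K_c)$ be the optimal center over the search domain $\Omega_\Lambda$, write $\Phi^* = \Phi(K_{c^*})$, and let $\hat c$ be the grid center that \textsc{KernelGrid} outputs. Since $\hat c$ is itself a feasible center, $\Phi(K_{\hat c}) \le \Phi^*$ holds automatically, so only the lower bound $\Phi(K_{\hat c}) \ge \Phi^* - \eps$ needs work.

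For the spatial part I would observe geometrically that every point of $\Omega_\Lambda$ lies within distance $\tau/\sqrt{2}$ (the half-diagonal of a cell) of the nearest point of a regular grid of sidelength $\tau$; hence there is a grid center $c'$ with $\|c' - c^*\| \le \tau_\eps/\sqrt{2}$. Combining this with Lemma~\ref{lem:spatial-lipshitz} and a resolution $\tau_\eps$ taken a small constant factor finer than $\eps r \sqrt{e/4}$ gives $|\Phi(K_{c'}) - \Phi^*| \le \eps/2$. For the evaluation part, Section~\ref{sec:approx-log-lambda} shows (using convexity of $\Phi_{p,q}$ in $(p,q)$ together with the Lipschitz bound $2|B|$ from Lemma~\ref{lem:lipshitz-pq}) that $s_\eps = |B|/\eps$ gradient steps make $|\tilde\Phi(K_c) - \Phi(K_c)| \le \eps/4$ at every center $c$ simultaneously.

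I would then chain these bounds. Because $\hat c$ maximizes the computed value over the grid, $\tilde\Phi(K_{\hat c}) \ge \tilde\Phi(K_{c'})$, so
\[
\Phi(K_{\hat c}) \ge \tilde\Phi(K_{\hat c}) - \frac{\eps}{4} \ge \tilde\Phi(K_{c'}) - \frac{\eps}{4} \ge \Phi(K_{c'}) - \frac{\eps}{2} \ge \Phi^* - \eps,
\]
which together with $\Phi(K_{\hat c}) \le \Phi^*$ yields $|\Phi^* - \Phi(K_{\hat c})| \le \eps$. For the running time, the grid $G_\eps \cap \Omega_\Lambda$ has $\frac{4}{e}\frac{\Lambda^2}{\eps^2} = O(\Lambda^2/\eps^2)$ centers, and at each we run $s_\eps$ gradient steps costing $O(|B|)$ each, for a total of $O(\frac{\Lambda^2}{\eps^2}|B| s_\eps)$; substituting the worst-case $s_\eps = |B|/\eps$ gives $O(\frac{\Lambda^2}{\eps^3}|B|^2)$.

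The step I would be most careful about is the \emph{double} appearance of the gradient-descent error: it is incurred once so that the near-optimal grid center $c'$ can still win the $\argmax$ in computed value, and again when translating the computed value at the winner $\hat c$ back to its true value $\Phi(K_{\hat c})$. This is exactly why the per-center tolerance must be a constant fraction of $\eps$ (here $\eps/4$) rather than $\eps$ itself, and why the spatial budget is $\eps/2$. A secondary point worth stating explicitly is that the argument assumes the maximizer $c^*$ lies in the searched domain $\Omega_\Lambda$ so that a nearby grid center exists; since $B \subset \Omega_\Lambda$ and the Gaussian kernel drives every $K(x) \to 0$, and hence $\Phi \to 0$, as the center leaves the data, $\Omega_\Lambda$ is the natural region over which the maximum is taken.
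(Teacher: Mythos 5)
Your proof is correct and follows essentially the same route the paper takes implicitly in the text of Section~\ref{sec:algorithm-description}: the spatial Lipschitz bound of Lemma~\ref{lem:spatial-lipshitz} fixes the grid resolution, the gradient-descent convergence analysis of Section~\ref{sec:approx-log-lambda} handles the per-center evaluation, and the runtime is the grid size times the per-center cost. Your explicit $\eps/2 + \eps/4 + \eps/4$ budget --- charging the gradient-descent error once at the near-optimal grid center $c'$ and again at the returned winner $\hat c$ --- is in fact more careful than the paper, which bounds each error source by $\eps$ separately without combining them into a single additive guarantee.
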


 \begin{algorithm}
	\caption{\textsc{KernelGrid}$(B, \eps, \Omega_\Lambda)$}
	\label{alg:D2}
	\begin{algorithmic}
	    \STATE Initialize $\Phi = 0$; define $G_{\eps,\Lambda} = G_\eps \cap \Omega_\Lambda$
		\FOR{$c \in G_{\eps,\Lambda}$}
		  \STATE $\Phi_c = \mathsf{GradDesc}(\Phi_{p, q}(K_c))$ over $p,q$ on $B$
		   \STATE \textbf{if } ($\Phi_c > \Phi$) \textbf{ then } $\Phi = \Phi_c$
		\ENDFOR
        \STATE return $\Phi$
		\end{algorithmic}
\end{algorithm}

\Paragraph{Adaptive Gridding}
We next adjust the grid resolution based on the density of $B$.  
We partition the $\Omega_\Lambda$ domain with a coarse grid $H_\eps$ with side length $2r_{max}$ (from Lemma \ref{lem:truncate}). 
For a cell $\gamma \in H_\eps$ in this grid, let $S_\gamma$ denote the $6 r_{\textrm{max}} \times 6 r_{\textrm{max}}$ region which expands the grid cell $\gamma$ the length of one grid cell in either direction.  For any center $c \in \gamma$, all points $x \in B$ which are within a distance of $2r_{\textrm{max}}$ from $c$ must be within $S_\gamma$.  Hence, by Lemma \ref{lem:truncate} we can evaluate $\Phi(K'_c)$ for any $c \in \gamma$ only inspecting $S \cap B$.  

Moreover, by the local density argument in Lemma \ref{lem:spatial-lipshitz-adaptive}, we can describe a new grid $G'_{\eps,\gamma}$ inside of each $\gamma \in H_\eps$ with center separation $\beta$ only depending on the local number of points $|S_\gamma \cap B|$.  In particular we have for $c,c' \in \gamma$ with $\|c-c'\|=\beta$
\[
|\Phi(K'_c) - \Phi(K'_{c'})|
\le  
\beta \frac{|S \cap B|}{|B|}\frac{2 r_\text{max}}{r^2} + \eps
\]
To guarantee that all $c \in \gamma$ have another center $c' \in G'_{\eps,\gamma}$ so that 
$|\Phi(K_c) - \Phi(K'_{c'})| \leq 2\eps$
we set the side length in $G'_{\eps,\gamma}$ as 
\[
\beta_\gamma = \eps \frac{|B|}{|B \cap S_\gamma|} \frac{r^2}{2r_\textrm{max}},
\]
so the number of grid points in $G'_{\eps,\gamma}$ is
\[
|G'_{\eps,\gamma}| 
= 
\frac{4 r_\textrm{max}^2}{\beta_\gamma^2} 
= 
\frac{r^4_\text{max}}{r^4} 
\frac{16}{\eps^2} \frac{|B \cap S_\gamma|^2}{|B|^2}. 
\]

Now define $G'_\eps = \bigcup_{\gamma \in H_\eps} G'_{\eps,\gamma}$ as the union of these adaptively defined subgrids over each coarse grid cell.  Its total size is
\begin{align*}
|G'_\eps| 
&= 
\sum_{\gamma \in H_\eps} |G_{\eps,\gamma}| 
=
\sum_{\gamma \in H_\eps} \frac{r^4_\text{max}}{r^4} 
\frac{16}{\eps^2} \frac{|B \cap S_\gamma|^2}{|B|^2}.  
\\ &=
\frac{r^4_\text{max}}{r^4} 
\frac{16}{\eps^2}\cdot \frac{1}{|B|^2}
\sum_{\gamma \in H_\eps}  |B \cap S_\gamma|^2.  
\\ &=
\log^2(|B|/\eps)
\frac{16}{\eps^2}
\sum_{\gamma \in H_\eps}  \cdot \frac{1}{|B|^2}
\sum_{\gamma \in H_\eps}  |B \cap S_\gamma|^2.  
\\ &\leq
\log^2(|B|/\eps)
\frac{1296}{\eps^2},
\end{align*}
where the last inequality follows from Cauchy-Schwarz, and that each point $x \in B$ appears in $9$ cells $S_\gamma$.   This replaces dependence on the domain size $\Lambda^2$ in the size of the grid $G_\eps$, with a mere logarithmic $\log^2(|B|/\eps)$ dependence on $|B|$ in $G'_\eps$.  We did not minimize constants, and in practice we use significantly smaller constants.  

Moreover, since this holds for each $c \in \gamma$, and the same gridding mechanism is applied for each $\gamma \in H_\eps$, this holds for all $c \in \Omega_\Lambda$.  
We call the algorithm that extends Algorithm \ref{alg:D2} to use this grid $G'_\eps$ in place of $G_\eps$ \textsc{KernelAdaptive}.

\begin{lemma}
\label{lem:kernel-adaptive}
$\textsc{\em KernelAdaptive}(B,\eps,\Omega_\Lambda)$ returns $\Phi(K_{\hat c})$ for a center $\hat c$ 
so
$|\max_{K_c \in \K_r} \Phi(K_c) - \Phi(K_{\hat c})| \le \eps$ 
in time $O(\frac{1}{\eps^2} \log^2 \frac{|B|}{\eps} |B| s_\eps)$, which in the worst case is $O(\frac{1}{\eps^3} |B|^2 \log^2 \frac{|B|}{\eps})$.
\end{lemma}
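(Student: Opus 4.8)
The plan is to prove the two assertions of Lemma~\ref{lem:kernel-adaptive} --- correctness of the returned value and the running time --- following the same template as the proof of Lemma~\ref{lem:kernel-grid}, but with the uniform grid $G_\eps$ replaced by the adaptive grid $G'_\eps$ and with three distinct sources of approximation error tracked simultaneously: the truncation error of Lemma~\ref{lem:truncate}, the spatial discretization error from the local-density bound of Lemma~\ref{lem:spatial-lipshitz-adaptive}, and the gradient-descent evaluation error of Section~\ref{sec:approx-log-lambda}. The entire argument reduces to assembling facts already established in the excerpt and checking that their combination is a constant multiple of $\eps$.

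For correctness, let $c^* = \argmax_{K_c \in \K_r} \Phi(K_c)$, which (as in Lemma~\ref{lem:kernel-grid}) lies in $\Omega_\Lambda$ since $B \subset \Omega_\Lambda$, and let $\hat\Phi$ be the value the algorithm returns, i.e.\ the largest approximated truncated statistic over all grid centers. Then $c^*$ lies in some coarse cell $\gamma \in H_\eps$, and the adaptive construction above the lemma guarantees a grid point $c' \in G'_{\eps,\gamma}$ with $|\Phi(K_{c^*}) - \Phi(K'_{c'})| \le 2\eps$; this is exactly the derived bound, already folding the truncation of Lemma~\ref{lem:truncate} together with Lemma~\ref{lem:spatial-lipshitz-adaptive} and the choice of $\beta_\gamma$. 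Adding the gradient-descent error $\eps$ from Section~\ref{sec:approx-log-lambda} shows the approximated value at $c'$ is at least $\Phi(K_{c^*}) - 3\eps$, so $\hat\Phi \ge \Phi(K_{c^*}) - 3\eps$. For the reverse direction, every grid center $c$ satisfies $\Phi(K'_c) \le \Phi(K_c) + \eps \le \Phi(K_{c^*}) + \eps$ because $c^*$ is the global maximizer, so no approximated grid value exceeds $\Phi(K_{c^*})$ by more than $2\eps$. Hence $|\Phi(K_{c^*}) - \hat\Phi| \le 3\eps$, and running the algorithm with parameter $\eps/3$ yields the claimed bound $|\max_{K_c \in \K_r}\Phi(K_c) - \Phi(K_{\hat c})| \le \eps$.

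For the running time, the number of grid centers is the grid size $|G'_\eps| = O\!\left(\frac{1}{\eps^2}\log^2\frac{|B|}{\eps}\right)$ computed immediately above the lemma, using $r_\text{max}^4/r^4 = \log^2(|B|/\eps)$ from Lemma~\ref{lem:truncate} and the Cauchy--Schwarz point-counting bound $\sum_\gamma |B \cap S_\gamma|^2 = O(|B|^2)$. Charging $O(|B|\,s_\eps)$ per center for the gradient-descent evaluation gives total time $O\!\left(\frac{1}{\eps^2}\log^2\frac{|B|}{\eps}\,|B|\,s_\eps\right)$, and substituting the worst-case iteration count $s_\eps = |B|/\eps$ produces $O\!\left(\frac{1}{\eps^3}|B|^2\log^2\frac{|B|}{\eps}\right)$. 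I would also note that, because \textsc{KernelAdaptive} truncates each evaluation to $S_\gamma \cap B$, the per-center cost is really $O(|S_\gamma \cap B|\,s_\eps)$, so the total is $\sum_\gamma |G'_{\eps,\gamma}|\cdot|S_\gamma\cap B|\,s_\eps$; substituting $|G'_{\eps,\gamma}| = \frac{r_\text{max}^4}{r^4}\frac{16}{\eps^2}\frac{|B\cap S_\gamma|^2}{|B|^2}$ and bounding $\sum_\gamma |B\cap S_\gamma|^3 \le |B|^2\sum_\gamma|B\cap S_\gamma| \le 9|B|^3$ collapses this to the same asymptotic bound while being much smaller in practice. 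The main obstacle is precisely this bookkeeping: keeping the three error sources cleanly additive so their sum is a fixed constant times $\eps$, and verifying that exchanging the per-center work $|B|$ for the local count $|S_\gamma\cap B|$ leaves the worst-case asymptotics unchanged, which the cubic-sum estimate confirms.
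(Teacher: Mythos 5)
Your proof is correct and matches the paper's (implicit) argument: the paper states this lemma without a separate proof, relying on exactly the ingredients you assemble --- the grid-size bound $|G'_\eps| = O(\frac{1}{\eps^2}\log^2\frac{|B|}{\eps})$ and the $2\eps$ center-approximation guarantee derived in the Adaptive Gridding paragraph, plus the $O(|B|\,s_\eps)$ gradient-descent cost per center and the worst-case substitution $s_\eps = |B|/\eps$ --- and your explicit $3\eps$ decomposition with a final rescaling is just a more careful accounting of the same constants. One small mislabel in your closing remark: restricting each evaluation to $S_\gamma \cap B$ is the pruning step of \textsc{KernelFast}, not part of \textsc{KernelAdaptive} as defined in the paper, though your cubic-sum estimate correctly confirms (as the paper also notes) that this changes nothing in the worst-case asymptotics.
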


\Paragraph{Pruning} 
For both gridding methods the runtime is roughly the number of centers times the time to compute the gradient $O(|B|)$.   But via Lemma \ref{lem:truncate} we can ignore the contribution of far away points, and thus only need those in the gradient computation.  

However, this provides no worst-case asymptotic improvements in runtime for \textsc{KernelGrid}, or \textsc{KernelAdaptive} since all of $B$ may reside in a $r_\text{max} \times r_\text{max}$ cell.  
But in the practical setting we consider, this does provide a significant speedup as the data is usually spread over a large domain that is many times the size of $r_\text{max}$.

We will define two new methods \textsc{KernelPrune} and \textsc{KernelFast} (shown in Algorithm \ref{alg:Kfast}) where the former extends \textsc{KernelGrid} method, and latter extends \textsc{KernelAdaptive} with pruning.  
In particular, we note that bounds in Lemma \ref{lem:kernel-adaptive} hold for \textsc{KernelFast}.

\begin{algorithm}
\caption{\textsc{KernelFast}$(B, \eps, \Omega_\Lambda)$}
\label{alg:Kfast}
\begin{algorithmic}
  \STATE Initialize $\Phi = 0$; define grid $H_{\eps}$ over $\Omega_\Lambda$
    \FOR{$\gamma \in H_\eps$}
        \STATE Define $G'_{\eps, \gamma}$ adaptively on $\gamma$ and $S_\gamma \cap B$
    	\FOR{$c \in G'_{\eps,\Lambda}$}
    	  \STATE $\Phi_c = \mathsf{GradDesc}(\Phi_{p, q}(K_c))$ over $p,q$ on \emph{pruned set} $B \cap S_\gamma$
		  \STATE \textbf{if } ($\Phi_c > \Phi$) \textbf{ then } $\Phi = \Phi_c$
		\ENDFOR
	\ENDFOR
  \STATE return $\Phi$
\end{algorithmic}
\end{algorithm}


\vspace{-.1in}
\subsection{Sampling}
\label{sec:alg-sample}

We can dramatically improve runtimes on large data sets by sampling a coreset $B_\eps$ iid from $B$, according to Lemma \ref{lem:sample-bound}.  With probability $1-\delta$ we need $|B_\eps| = O(\frac{1}{\eps^2} \log^2 \frac{1}{\eps} \log \frac{\kappa}{\delta})$ samples, where $\kappa$ is the number of center evaluations, and can be set to the grid size.  
For \textsc{KernelGrid} $\kappa = |G_\eps| = O(\frac{\Lambda^2}{\eps^2})$ and in \textsc{AdaptiveGrid} $\kappa = |G'_\eps| = O(\frac{1}{\eps^2} \log^2\frac{|B_\eps|}{\eps}) = (\frac{1}{\eps^2} \log^2\frac{1}{\eps})$.
We restate the runtime bounds with sampling to show they are independent of $|B|$.


\begin{lemma}
\label{lem:kernel-gridding-sample}
$\textsc{\em KernelGrid}(B_\eps,\eps,\Omega_\Lambda)$ $\&$
$\textsc{\em KernelPrune}(B_\eps,\eps,\Omega_\Lambda)$
with sample size $|B_\eps| = O(\frac{1}{\eps^2} \log \frac{\Lambda}{\eps \delta})$ 
returns $\Phi(K_{\hat c})$ for a center $\hat c$ 
so
$|\max_{K_c \in \K_r} \Phi(K_c) - \Phi(K_{\hat c})| \le \eps$ 
in time 
$O(\frac{s_\eps}{\eps^4} \log \frac{\Lambda}{\eps \delta})$, 
with probability $1 - \delta$.  
In the worst case the runtime is 
$O(\frac{\Lambda^2}{\eps^7} \log^2 \frac{\Lambda}{\eps \delta})$.  
\end{lemma}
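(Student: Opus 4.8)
The plan is to obtain the guarantee by treating the sampled algorithm as the full-data algorithm of Lemma~\ref{lem:kernel-grid} with one extra, independent layer of error coming from the coreset $B_\eps$. Concretely, there are three error sources to control, each at additive scale $\eps$: (i) the spatial discretization, already bounded by the gridding argument behind Lemma~\ref{lem:kernel-grid} via the Lipschitz property of Lemma~\ref{lem:spatial-lipshitz}; (ii) the inexact maximization over $(p,q)$, bounded by the $s_\eps = |B_\eps|/\eps$ gradient-descent iterations of Section~\ref{sec:approx-log-lambda}; and (iii) the replacement of $B$ by the sample $B_\eps$, controlled by Lemma~\ref{lem:sample-bound}. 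Since each is $O(\eps)$, their sum is $O(\eps)$ and the claim follows after rescaling $\eps$ by a constant. The only genuinely new ingredient over Lemma~\ref{lem:kernel-grid} is layer (iii), so the argument is essentially a bookkeeping of how the coreset interacts with the grid.

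For correctness I would write $\Phi_B$ and $\Phi_{B_\eps}$ for the statistic evaluated on $B$ and on $B_\eps$, let $c^* = \argmax_{c}\Phi_B(K_c)$ be the true optimal center, let $\tilde c \in G_\eps$ be the grid point guaranteed by the gridding argument with $\Phi_B(K_{c^*}) - \Phi_B(K_{\tilde c}) \le \eps$, and let $\hat c = \argmax_{c \in G_\eps}\Phi_{B_\eps}(K_c)$ be the center the algorithm actually selects. Lemma~\ref{lem:sample-bound}, applied with a union bound over the $\kappa = |G_\eps|$ grid centers, gives $|\Phi_B(K_c) - \Phi_{B_\eps}(K_c)| \le \eps$ simultaneously for all $c \in G_\eps$ with probability $1-\delta$. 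Chaining,
\begin{align*}
\Phi_B(K_{\hat c})
&\ge \Phi_{B_\eps}(K_{\hat c}) - \eps
\ge \Phi_{B_\eps}(K_{\tilde c}) - \eps \\
&\ge \Phi_B(K_{\tilde c}) - 2\eps
\ge \Phi_B(K_{c^*}) - 3\eps,
\end{align*}
while $\Phi_B(K_{\hat c}) \le \Phi_B(K_{c^*})$ trivially, so $|\max_{K_c}\Phi_B(K_c) - \Phi_B(K_{\hat c})| \le 3\eps$; folding in the per-evaluation gradient-descent error of (ii) adds a fourth $\eps$, and rescaling gives the stated bound. For \textsc{KernelPrune} the identical chain applies because Lemma~\ref{lem:truncate} guarantees that dropping far points changes each $\Phi(K_c)$ by at most $\eps$, so pruning only alters which points enter the gradient sum without affecting the approximation.

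For the parameters I would substitute $\kappa = |G_\eps| = O(\Lambda^2/\eps^2)$ into Lemma~\ref{lem:sample-bound}, so that $\log(\kappa/\delta) = O(\log(\Lambda/(\eps\delta)))$ and, absorbing lower-order $\log(1/\eps)$ factors, the sample size is $|B_\eps| = O(\tfrac{1}{\eps^2}\log\tfrac{\Lambda}{\eps\delta})$. The runtime is the number of grid centers times the per-center gradient cost, $|G_\eps|\cdot O(|B_\eps|\, s_\eps) = O(\tfrac{\Lambda^2}{\eps^4}\log\tfrac{\Lambda}{\eps\delta}\, s_\eps)$, which is independent of $|B|$; the worst case follows by taking $s_\eps = O(|B_\eps|/\eps) = O(\tfrac{1}{\eps^3}\log\tfrac{\Lambda}{\eps\delta})$, yielding $O(\tfrac{\Lambda^2}{\eps^7}\log^2\tfrac{\Lambda}{\eps\delta})$.

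The main obstacle is layer (iii), and it is essentially delegated to Lemma~\ref{lem:sample-bound}: the statistic $\Phi(K_c) = \max_{p,q}\Phi_{p,q}(K_c)$ is a maximum over a continuum of rate parameters of a sum over points, so the coreset must approximate this sum uniformly not only over the $\kappa$ grid centers but also over all $(p,q)$, which is what forces the $\log^2(1/\eps)$ and the union-bound $\log\kappa$ factors in the sample size. Granting that lemma, the remaining difficulty here is only the care in the triangle inequality above, ensuring the sample-optimal center $\hat c$ transfers back to a near-optimal center for the true data $B$, and in matching $\kappa$ exactly to the grid size so that the $\log(\Lambda/(\eps\delta))$ dependence comes out correctly.
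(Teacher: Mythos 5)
Your proposal is correct and follows exactly the route the paper intends (the paper states this lemma without proof, as a direct combination of Lemma~\ref{lem:kernel-grid} with Lemma~\ref{lem:sample-bound} via a union bound over the $\kappa = |G_\eps|$ grid centers and a triangle-inequality transfer of the sample-optimal center back to $B$). Your runtime accounting $O(\frac{\Lambda^2 s_\eps}{\eps^4}\log\frac{\Lambda}{\eps\delta})$ is in fact the one consistent with the stated worst case $O(\frac{\Lambda^2}{\eps^7}\log^2\frac{\Lambda}{\eps\delta})$; the $\Lambda^2$ factor appears to have been dropped by typo in the lemma's first runtime expression.
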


\begin{lemma}
\label{lem:adaptive-gridding-sample}
$\textsc{\em KernelAdaptive}(B_\eps,\eps,\Omega_\Lambda)$ $\&$ 
$\textsc{\em KernelFast}(B_\eps,\eps,\Omega_\Lambda)$
with sample size $|B_\eps| = O(\frac{1}{\eps^2} \log \frac{1}{\eps \delta})$ 
returns $\Phi(K_{\hat c})$ for a center $\hat c$ 
so
$|\max_{K_c \in \K_r} \Phi(K_c) - \Phi(K_{\hat c})| \le \eps$ 
in time 
$O(\frac{s_\eps}{\eps^4} \log^3 \frac{1}{\eps \delta})$, 
with probability $1 - \delta$.  
In the worst case the runtime is 
$O(\frac{1}{\eps^7} \log^4 \frac{1}{\eps \delta})$.  
\end{lemma}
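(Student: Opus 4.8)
The plan is to treat the statement as a composition of the already-established guarantee for \textsc{KernelAdaptive}/\textsc{KernelFast} (Lemma~\ref{lem:kernel-adaptive}) with the coreset concentration of Lemma~\ref{lem:sample-bound}, controlling two independent sources of error --- the discretization induced by the adaptive grid and the replacement of $B$ by the iid sample $B_\eps$ --- and then rescaling $\eps$ by a constant. First I would invoke Lemma~\ref{lem:sample-bound} with the union bound taken only over the centers actually evaluated, i.e. over the adaptive grid $G'_\eps$, so that $\kappa = |G'_\eps|$ and with probability $1-\delta$ we have $|\Phi_{B}(K_c) - \Phi_{B_\eps}(K_c)| \le \eps$ simultaneously at every grid center $c \in G'_\eps$. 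Second, I would apply the gridding guarantee of Lemma~\ref{lem:kernel-adaptive} run on $B_\eps$, which gives that the returned grid center $\hat c$ satisfies $\Phi_{B_\eps}(K_{\hat c}) \ge \max_{K_c \in \K_r}\Phi_{B_\eps}(K_c) - \eps$. The remaining piece is to transfer the continuous optimum of $\Phi_B$ onto the grid: letting $c^*$ maximize $\Phi_B$, the spatial-Lipschitz bound of Lemma~\ref{lem:spatial-lipshitz}, whose constant $\tfrac1r\sqrt{8/e}$ is independent of the point set, produces a grid center $c'$ with $\Phi_B(K_{c'})$ within $\eps$ of $\Phi_B(K_{c^*})$. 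Chaining these three inequalities through $\Phi_{B_\eps}$ with the triangle inequality bounds $|\max_{K_c\in\K_r}\Phi_B(K_c) - \Phi_{B_\eps}(K_{\hat c})|$ by $O(\eps)$.

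For the sample size and runtime I would solve the sizing constraint self-consistently. Lemma~\ref{lem:sample-bound} requires $|B_\eps| = O(\tfrac{1}{\eps^2}\log\tfrac{\kappa}{\delta})$, while Lemma~\ref{lem:kernel-adaptive} gives $\kappa = |G'_\eps| = O(\tfrac{1}{\eps^2}\log^2\tfrac{|B_\eps|}{\eps})$. Since the latter is polynomial in $1/\eps$ and only polylogarithmic in $|B_\eps|$, substituting shows $\log\tfrac{\kappa}{\delta} = O(\log\tfrac{1}{\eps\delta})$ and hence $|B_\eps| = O(\tfrac{1}{\eps^2}\log\tfrac{1}{\eps\delta})$, with $\log\tfrac{|B_\eps|}{\eps} = O(\log\tfrac{1}{\eps\delta})$ closing the loop; crucially this is independent of $|B|$. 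The runtime then follows by substituting $|B| \mapsto |B_\eps|$ into the $O(\tfrac{1}{\eps^2}\log^2\tfrac{|B|}{\eps}\,|B|\,s_\eps)$ bound of Lemma~\ref{lem:kernel-adaptive}: the grid contributes $O(\tfrac{1}{\eps^2}\log^2\tfrac{1}{\eps\delta})$ centers, each gradient step costs $O(|B_\eps|) = O(\tfrac{1}{\eps^2}\log\tfrac{1}{\eps\delta})$, and there are $s_\eps$ steps, giving $O(\tfrac{s_\eps}{\eps^4}\log^3\tfrac{1}{\eps\delta})$. The worst case replaces $s_\eps$ by its bound $s_\eps = |B_\eps|/\eps = O(\tfrac{1}{\eps^3}\log\tfrac{1}{\eps\delta})$ from Section~\ref{sec:approx-log-lambda}, yielding $O(\tfrac{1}{\eps^7}\log^4\tfrac{1}{\eps\delta})$. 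The identical bound covers \textsc{KernelFast} since pruning only discards provably negligible contributions (Lemma~\ref{lem:truncate}) and cannot increase the worst-case cost.

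The main obstacle I anticipate is the circularity between the coreset size and the adaptive grid size, together with the need to certify accuracy against $\Phi_B$ at a continuous optimum $c^*$ that need not lie on the grid: the grid $G'_\eps$ is built from the local density of $B_\eps$, so its resolution guarantee is stated against $\Phi_{B_\eps}$. Resolving this cleanly hinges on the fact that the Lipschitz constant of $\Phi$ in the center does not depend on the underlying point set, so $c^*$ can be pulled onto a grid center of $\Phi_B$ at cost $\eps$ without invoking a uniform VC-type concentration over all of $\R^2$; a union bound over the finitely many grid centers then suffices, which is exactly what keeps $\kappa$ finite and $|B_\eps|$ independent of $|B|$. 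A secondary check is that the density ratios $|S_\gamma\cap B_\eps|/|B_\eps|$ driving the adaptive resolution concentrate around $|S_\gamma\cap B|/|B|$, so the grid stays fine enough for both $\Phi_B$ and $\Phi_{B_\eps}$; this follows from the same sampling bound applied across the $O(\tfrac{1}{\eps^2}\log^2\tfrac{1}{\eps})$ cells and is absorbed into the constants. Everything else is routine substitution, mirroring the argument for Lemma~\ref{lem:kernel-gridding-sample}.
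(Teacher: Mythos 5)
Your proposal is correct and matches the paper's (implicit) argument: the paper states this lemma without a separate proof, as a direct composition of Lemma~\ref{lem:sample-bound} (with $\kappa = |G'_\eps|$) and Lemma~\ref{lem:kernel-adaptive} run on $B_\eps$, which is exactly your route, including the self-consistent resolution of $|B_\eps|$ against $|G'_\eps|$ and the substitution $s_\eps = |B_\eps|/\eps$ for the worst-case bound. One small correction: in the step pulling $c^*$ onto the grid you should invoke Lemma~\ref{lem:spatial-lipshitz-adaptive} rather than Lemma~\ref{lem:spatial-lipshitz}, since the adaptive spacing $\beta_\gamma$ in sparse cells greatly exceeds $\eps r\sqrt{e/8}$ and only the density-weighted Lipschitz bound certifies the $\eps$ error there --- a point your closing paragraph already acknowledges.
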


\vspace{-.1in}
\subsection{Multiple Bandwidths}
\label{sec:alg-bandwidth}


We next show a sequence of bandwidths, such that one of them is close to the $r$ used in any $K \in \K$ (assuming some reasonable but large range on the values $r$), and then take the maximum over all of these experiments. 
If the sequence of bandwidths $r_{0} \ldots r_{s}$ is such that  $r_{i} - r_{i - 1} \le \frac{e r_{i}\eps}{4}$ then  $|\Phi(K_{r_{i}}) - \Phi(K_{r_{i}})| \le \eps$.
\begin{lemma}
A geometrically increasing sequence of bandwidths $R_\text{min} = r_{0}, \ldots, R_\text{max} = r_s$ with $s = O(\frac{1}{\eps}\log \frac{R_\text{max}}{R_\text{min}})$ is sufficient so $|\max_i \Phi(K_{r_i}) - \Phi(K_r)| \le \eps$ for any bandwidth $r \in  [R_\text{min}, R_\text{max}]$.
\end{lemma}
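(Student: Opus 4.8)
The plan is to treat the given per-step inequality (the one just above the statement: if $r_i - r_{i-1} \le \frac{e r_i \eps}{4}$ then consecutive statistics differ by at most $\eps$) as a bandwidth analogue of the center-Lipschitz property of Lemma~\ref{lem:spatial-lipshitz}, and then simply count how many geometrically spaced bandwidths are needed to cover $[R_\text{min}, R_\text{max}]$ while keeping this guarantee on every consecutive pair. First I would observe that the hypothesis $r_i - r_{i-1} \le \frac{e r_i \eps}{4}$ is stated relative to the \emph{larger} endpoint $r_i$, so it is intrinsically a \emph{multiplicative} rather than an additive condition. This is the key structural fact that makes a geometric sequence the right choice: setting $r_i = \rho\, r_{i-1}$ for a fixed ratio $\rho > 1$ gives $r_i - r_{i-1} = r_i \frac{\rho-1}{\rho}$, so the per-step condition becomes $\frac{\rho-1}{\rho} \le \frac{e\eps}{4}$, which is independent of $i$. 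Hence one uniform ratio works across the entire range, and no finer spacing is ever needed at large bandwidths.

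Second, I would choose $\rho = 1 + \frac{e\eps}{4}$, for which $\frac{\rho-1}{\rho} = \frac{e\eps/4}{1+e\eps/4} \le \frac{e\eps}{4}$, so the hypothesis is met for every consecutive pair and the given bound yields $|\Phi(K_{r_i}) - \Phi(K_{r_{i-1}})| \le \eps$. More importantly, the same bound transfers to an \emph{arbitrary} $r \in [r_{i-1}, r_i]$ paired with $r_i$: since $r \ge r_{i-1}$ we have $r_i - r \le r_i - r_{i-1} \le \frac{e r_i \eps}{4}$, so $|\Phi(K_r) - \Phi(K_{r_i})| \le \eps$. Thus every bandwidth $r$ in the range has a grid bandwidth $r_i$ whose statistic is within $\eps$ of $\Phi(K_r)$.

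Third, I would convert this into the maximum statement. For any $r$, $\Phi(K_r) \le \Phi(K_{r_i}) + \eps \le \max_j \Phi(K_{r_j}) + \eps$, and since the grid bandwidths are a subset of $[R_\text{min}, R_\text{max}]$ we trivially have $\max_j \Phi(K_{r_j}) \le \max_r \Phi(K_r)$. Taking the max over $r$ then gives $|\max_j \Phi(K_{r_j}) - \max_r \Phi(K_r)| \le \eps$, which is the intended content: the geometric grid of bandwidths captures the optimal bandwidth up to $\eps$. (I would note in passing that the literal phrasing ``for any $r$'' cannot hold pointwise with the $\max$ on the left, since $\Phi$ genuinely varies across bandwidths; the meaningful claim is the approximation of the optimum.)

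Finally, I would count the grid size: $s = \log_\rho \frac{R_\text{max}}{R_\text{min}} = \frac{\ln(R_\text{max}/R_\text{min})}{\ln(1 + e\eps/4)}$, and using $\ln(1+x) \ge \frac{x}{1+x} \ge \frac{x}{2}$ for $x \le 1$ with $x = e\eps/4$, this is at most $\frac{8}{e\eps}\ln\frac{R_\text{max}}{R_\text{min}} = O(\frac{1}{\eps}\log\frac{R_\text{max}}{R_\text{min}})$, matching the claim. The only genuinely delicate ingredient is the per-step bandwidth-Lipschitz inequality itself — that $\Phi(K_r)$ changes by at most $\eps$ when $r$ moves within the prescribed fraction of itself — but this is handed to us above (mirroring the spatial argument of Lemma~\ref{lem:spatial-lipshitz}), so everything after it is the elementary multiplicative covering argument sketched here, and I expect no further obstacle.
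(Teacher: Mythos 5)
Your proposal is correct and follows essentially the same route as the paper's proof: both invoke the bandwidth Lipschitz bound $\bigl|\frac{\dir \Phi(K)}{\dir r}\bigr| \le \frac{4}{er}$ (Lemma~\ref{lem:bandwidth}) to justify the per-step condition, choose the geometric ratio $1+\frac{e\eps}{4}$, and bound the number of steps via $\log(1+x)\ge x/2$. Your extra care about which endpoint the Lipschitz constant is evaluated at, the explicit transfer to an arbitrary $r\in[r_{i-1},r_i]$, and the remark that the claim should be read as approximating the optimum are all sound refinements of the same argument.
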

\begin{proof}
To guarantee a $\eps$ error on the bandwidth $r$ there must be a nearby bandwidth $r_i$. Therefore if $|\Phi(K_{r_{i}}) - \Phi(K_{r_{i + 1}})| \le \eps$ then $|\Phi(K_{r_{i}}) - \Phi(K_{r})| \le \eps$ if $r \in [r_i, r_{i + 1}]$. 

From Lemma \ref{lem:bandwidth} we can use the Lipshitz bound at $r_i$ to note that
$|\Phi(K_{r_{i}}) - \Phi(K_{r_{i + 1}})| \le \frac{4}{e r_i}(r_{i + 1} - r_i)$. Setting this less than $\eps$ we can rearrange to get that  $r_{i + 1} \le (\frac{e}{4}\eps + 1) r_i$. 
That is 
$r_{0} (\frac{\eps e}{4} + 1)^{s} \ge r_{s}$,
which can be rearranged to get $s$
$s = \frac{\log(\frac{R_\text{max}}{R_\text{min}})}{\log(\frac{\eps e}{4} + 1)}$.
Since $\log(x + 1) \ge \frac{x}{2}$ when $x$ is in $(0,1)$, we have  
$s \le \frac{8}{\eps e}\log \frac{R_\text{max}}{R_\text{min}}$.
\end{proof}

Running our KSSS over a large sequence of bandwidths is simple and merely increases the runtime by a $O(\frac{1}{\eps}\log \frac{r_s}{r_0})$ factor.  Our experiments in Section \ref{sec:bandwidth} suggest that choosing $4$ to $6$ bandwidths should be sufficient (e.g., for scales $R_\text{max}/R_\text{min} = 1{,}000$).

\begin{figure*}[h!]
\includegraphics[width=.3\linewidth]{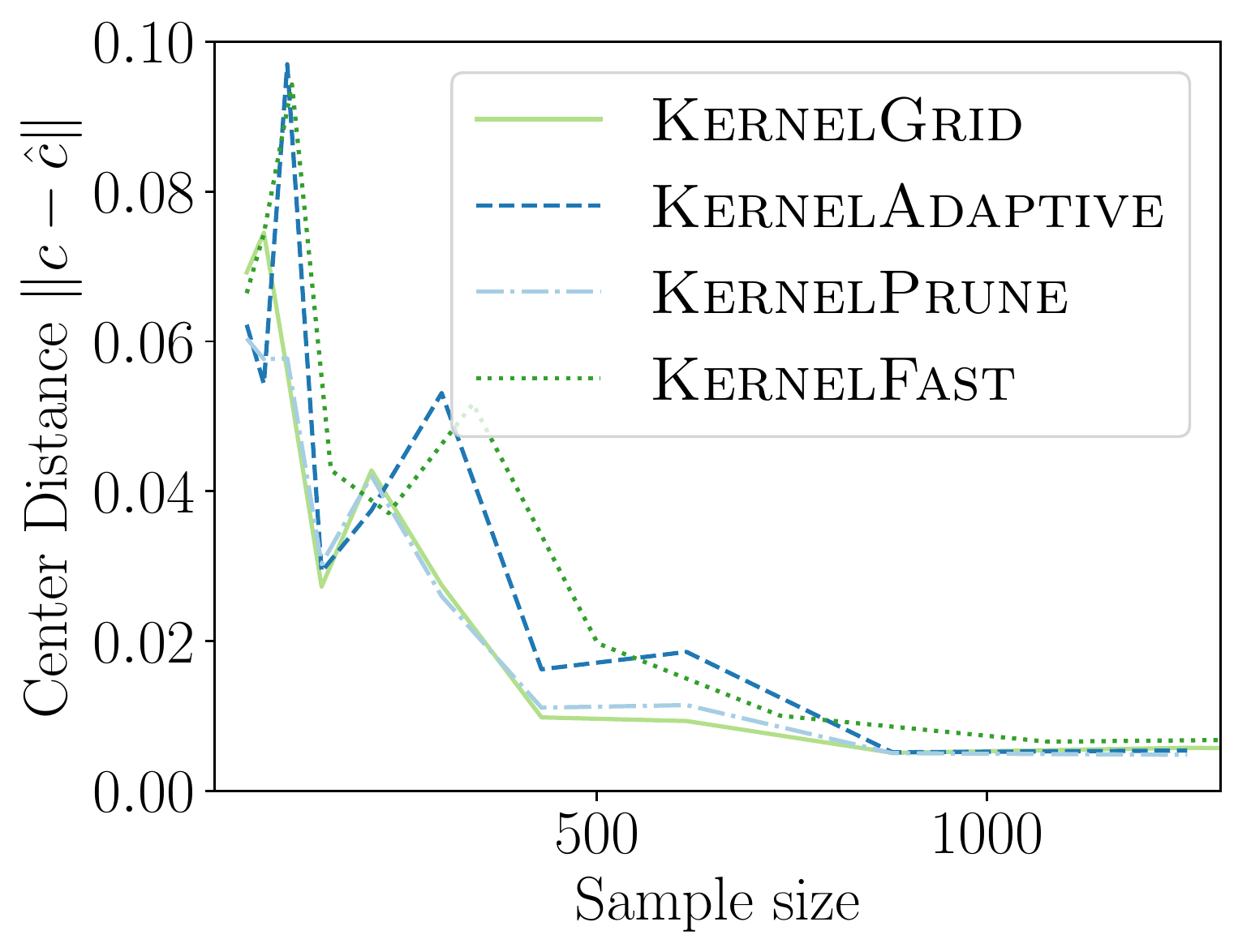}
\includegraphics[width=.3\linewidth]{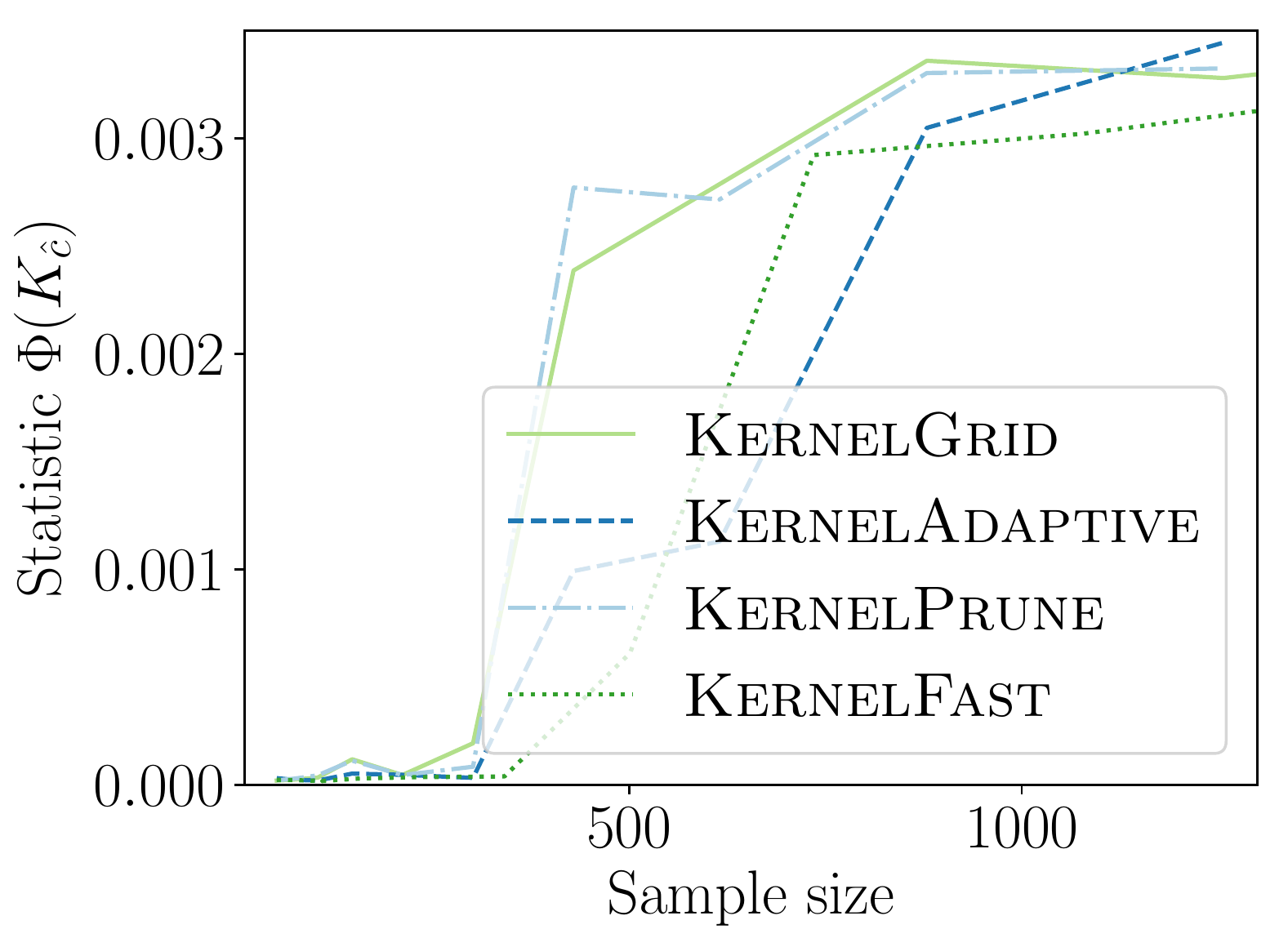}
\includegraphics[width=.3\linewidth]{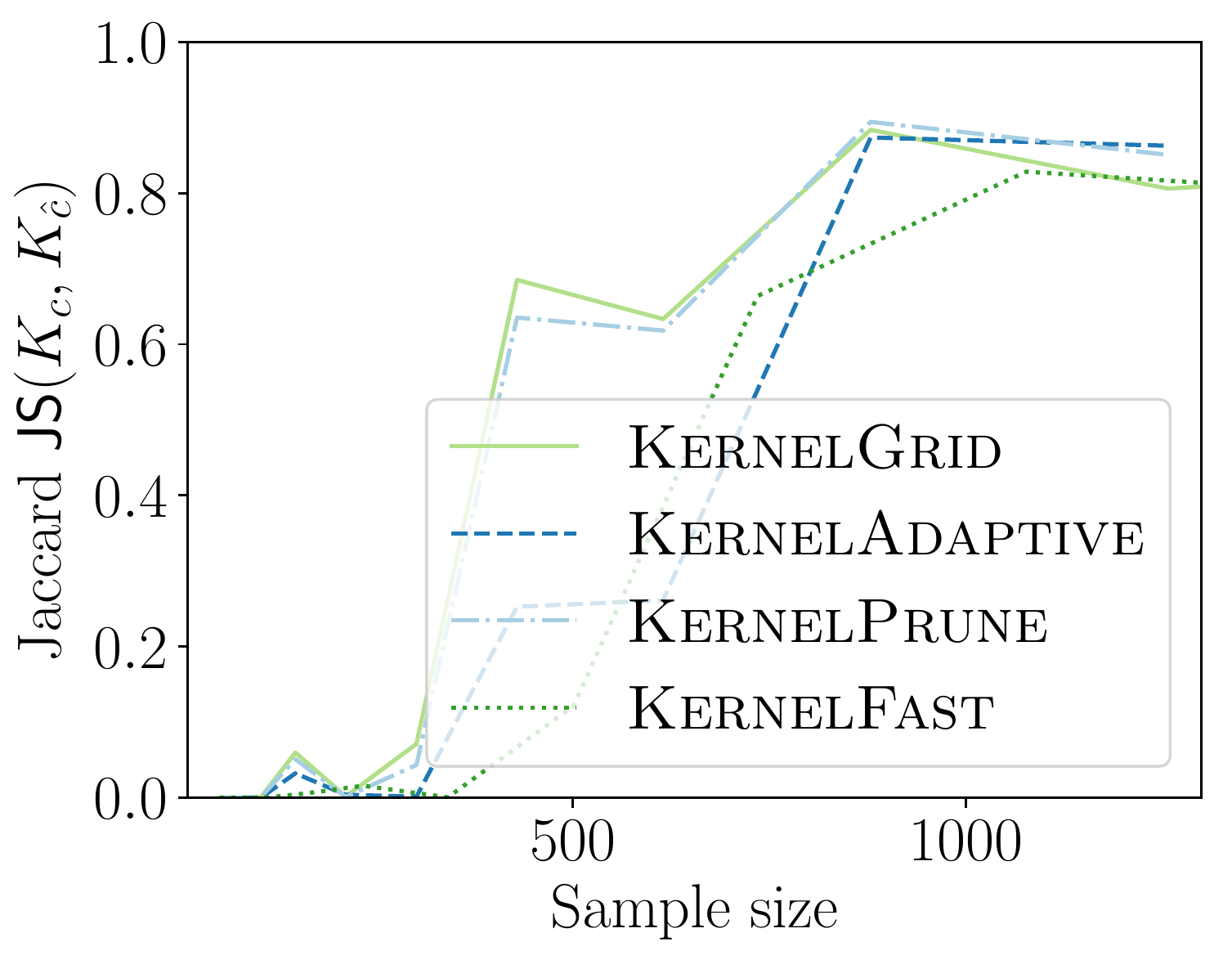}

\vspace{-5mm}
\caption{\label{fig:measure1-kernel}
New KSSS algorithms with statistican power compared with increased sample size using $\|c-\hat c\|$, $\Phi(K_{\hat c})$, and $\mathsf{JS}(K_c, K_{\hat c})$. }
\end{figure*}


\section{Experiments}
\label{sec:experiments}

We compare our new KSSS algorithms to the state-of-the-art methods in terms of empirical efficiency, statistical power, and sample complexity, on large spatial data sets with planted anomalies.

\Paragraph{Data sets} 
We run experiments on two large spatial data sets recording incidents of crime, these are used to represent the baseline data $B$.  
The first contains geo-locations of all crimes in Philadelphia from 2006-2015, and has a total size of $|B| = 687{,}636$; a subsample is shown in Figure \ref{fig:Bernoulli-model}.  
The second is the well-known Chicago Crime Dataset from 2001-2017, and has a total size of $|B| = 6{,}886{,}676$; which is $10x$ the size of the Philadelphia set.

In modeling crime hot spots, these may often be associated with an individual or group of individuals who live at a fixed location.  Then the crimes they may commit would often be centered at that location, and be more likely to happen nearby, and less likely further away.  A natural way to model this decaying crime likelihood would be with a Gaussian kernel --- as opposed to a uniform probability within a fixed radius, and no increased probability outside that zone.  Hence, our KSSS is a good model to potentially detect such spatial anomalies.  





\Paragraph{Planting anomalous regions}
To conduct controlled experiments, we use a spatial data sets $B$ above, but choose the $m$ values in a synthetic way.  In particular, we \emph{plant} anomalous regions $K_c \in \K_r$, and then each data point $x \in B$ is assigned to a group $P$ (with probability $K(x)$) or $Q$ (otherwise).  Those $x \in P$ will be assigned $m(x)$ through a Bernoulli process at rate $p$, that is $m(x) =1$ with probability $p$ and $0$ otherwise; those $x \in Q$ are assigned $m(x)$ at
rate $q$.  Given a region $K_c$, this could model a pattern crimes (those with $m(x) = 1$ may be all vehicle theft or have suspect matching a description), where $c$ may represent the epicenter of the targeted pattern of crime.  We use $p=0.8$ and $q=0.5$.  

We repeat this experiment with $20$ planted regions and plot the median on the Phileadelphia data set to compare our new algorithms and to compare sample complexity properties against existing algorithms.  We use $3$ planted regions on the Chicago data set to compare scalability (these take considerably longer to run).  We attempt to fix the size $P$ so $|P| = f |B|$, by adjusting the fixed and known bandwidth parameter $r$ on each planted region.  We set $f=0.03$ for Philadelphia, and $f=0.01$ for Chicago, so the region contains a fairly small region with about $3\%$ or $1\%$ of the data.


\Paragraph{Evaluating the models}
A statistical power test, plants an anomalous region (for instance as described above), and then determines how often an algorithm can recover that region; it measures recall.  
However, all considered algorithms typically do not recover the exact same region as the one planted, so we measure how close to the planted region $K_c$ the recovered one $K_{\hat c}$ is.  To do so we measure:
\begin{itemize}
    \item distance been found centers $\|c-\hat c\|$, smaller is better.
    \item $\Phi(K_{\hat c})$, the larger the better; it may be larger than $\Phi(K_c)$
    \item the extended Jaccard similarity $\mathsf{JS}(K_c, K_{\hat c})$ defined
    \[
    \mathsf{JS}(K,\hat K) = \frac{\langle K(x), \hat K(x) \rangle_B}{\langle K(x), K(x) \rangle_B + \langle \hat K(x), \hat K(x) \rangle_B  - \langle K(x), \hat K(x) \rangle_B}
    \]
    where $\langle K(x), \hat K(x') \rangle_B = \sum_{x \in B} K(x) \hat K(x)$; larger is better. 
\end{itemize}
We plot medians over $20$ trials; the targeted and hence measured values have variance because planted regions may not be the optimal region, since the $m(x)$ values are generated under a random process. 
When we cannot control the $x$-value (when using time) we plot a kernel smoothing over different parameters on $3$ trials.

\begin{figure}
\vspace{-2mm}
\includegraphics[width=.8\linewidth]{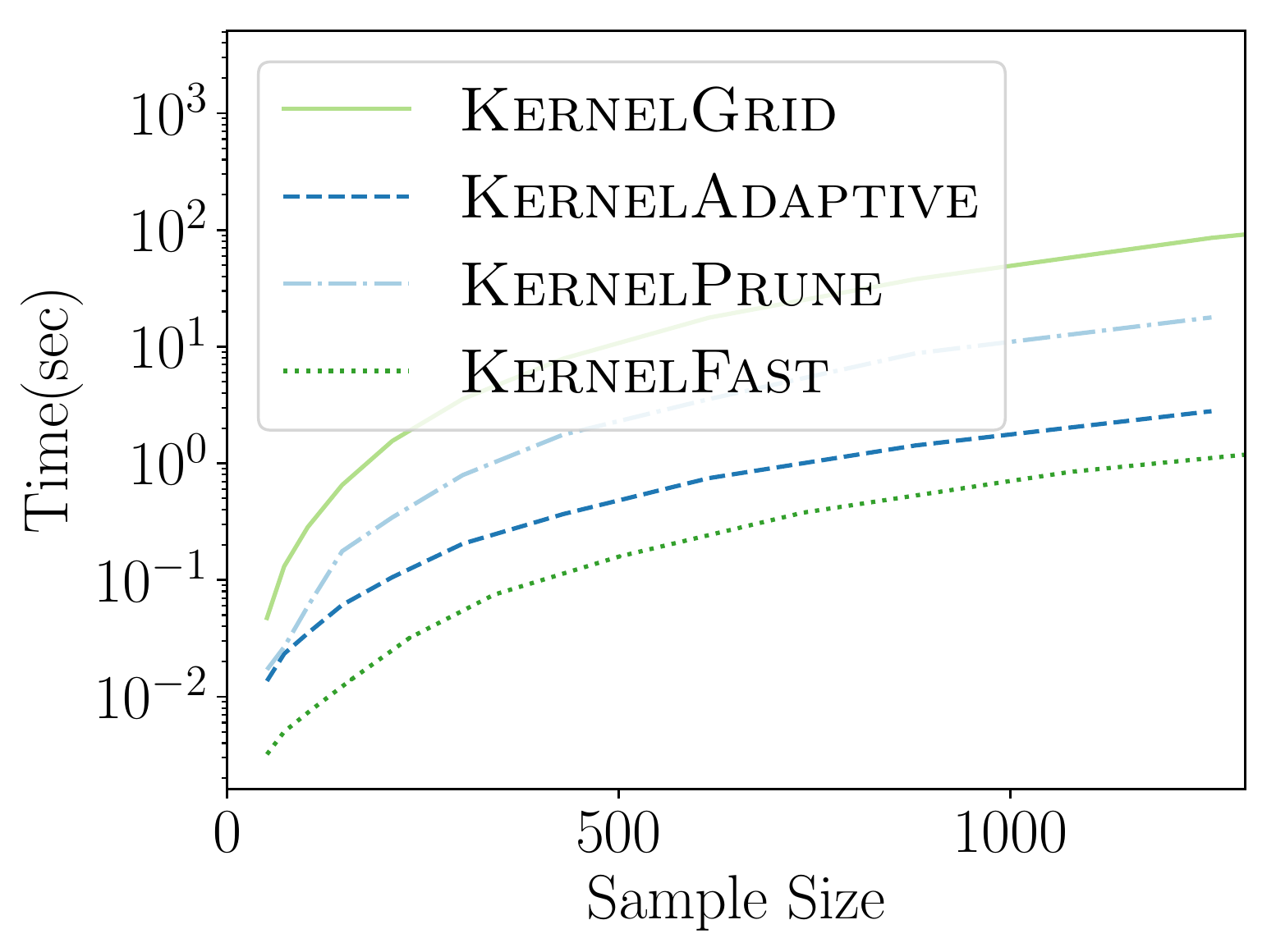}

\vspace{-5mm}
\caption{\label{fig:time-kernel}
Runtime of new KSSS algorithms in sample size.}
\end{figure}

\begin{figure*}
\vspace{-2mm}
\includegraphics[width=.3\linewidth]{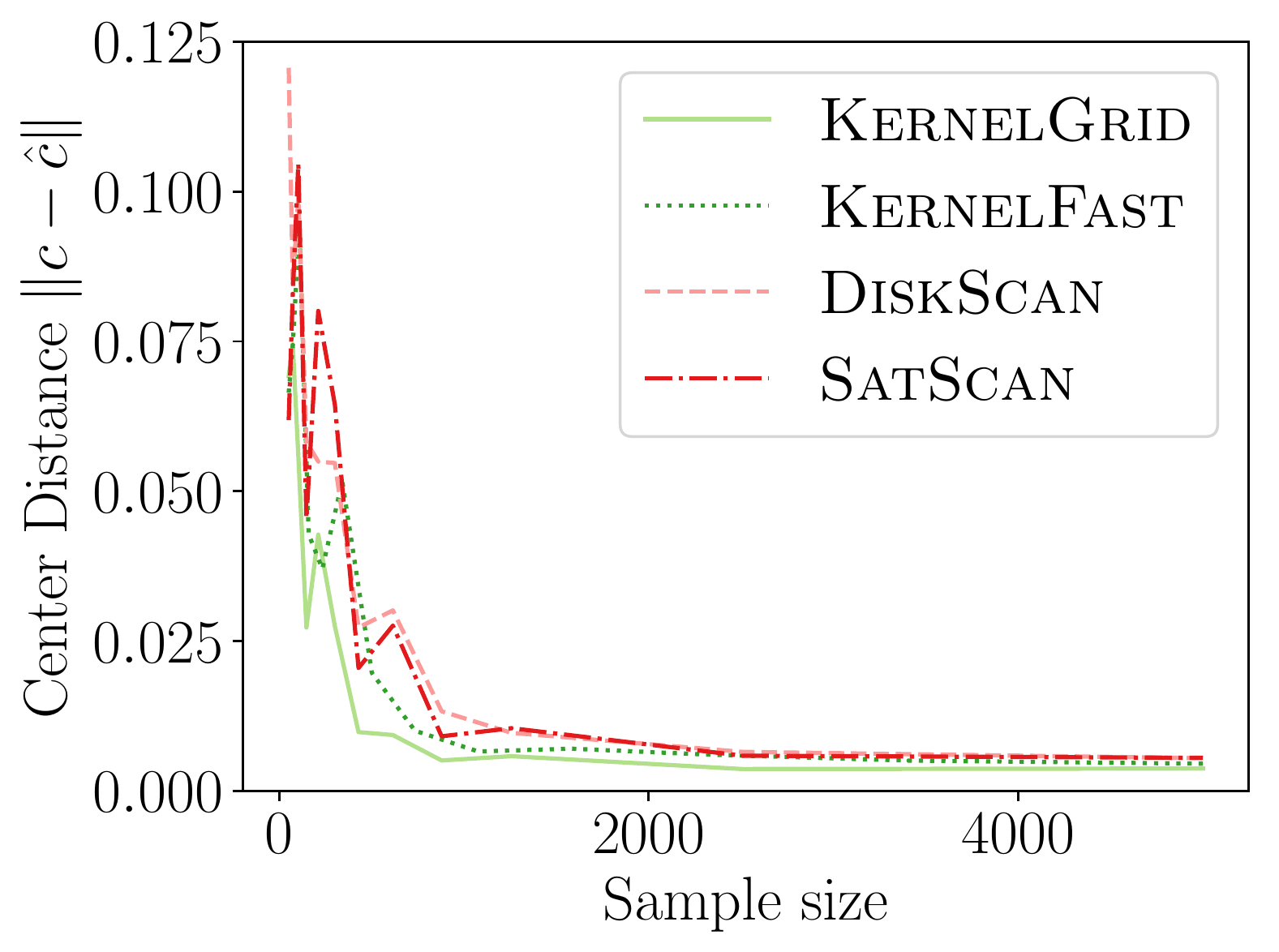}
\includegraphics[width=.3\linewidth]{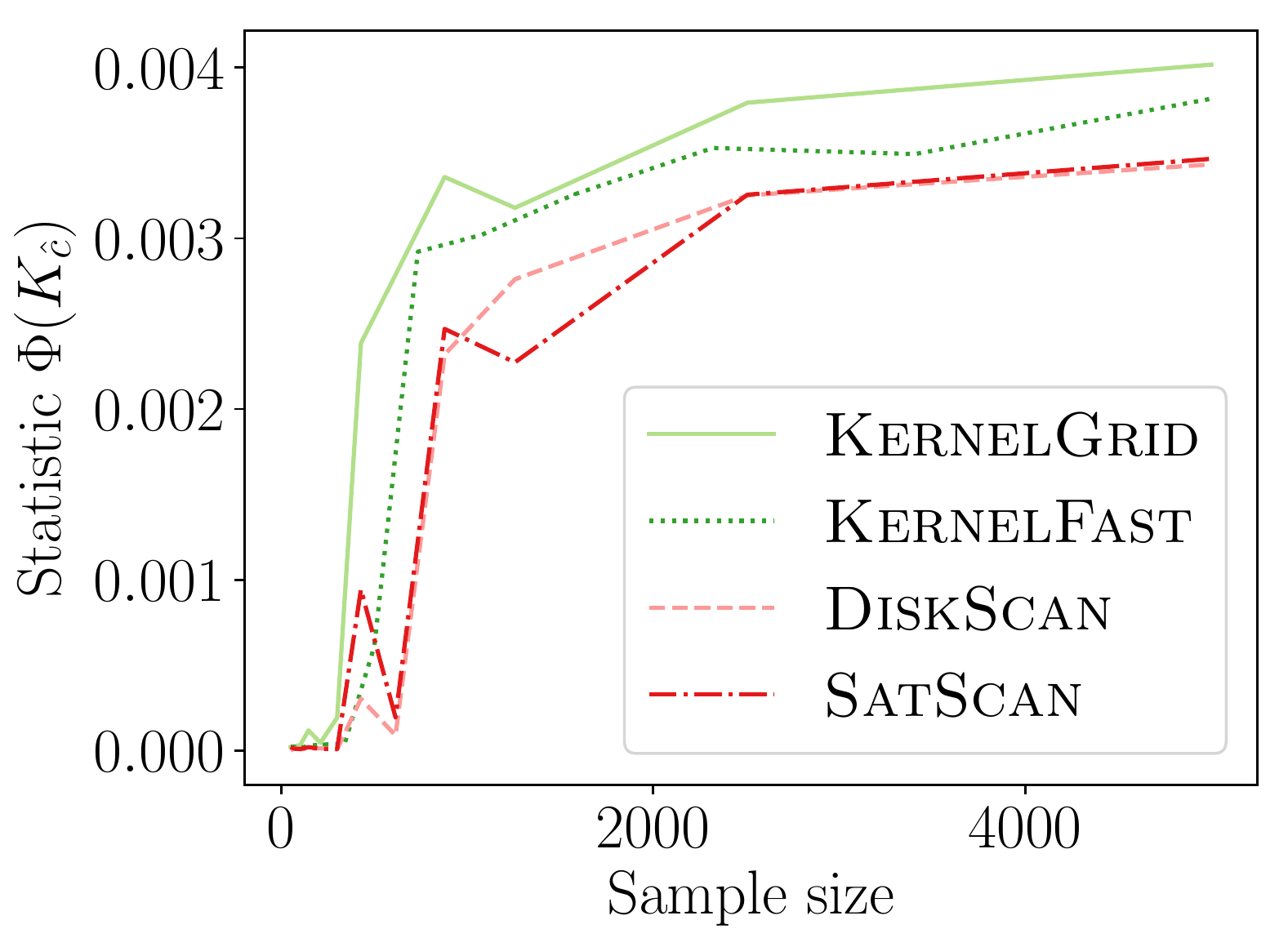}
\includegraphics[width=.3\linewidth]{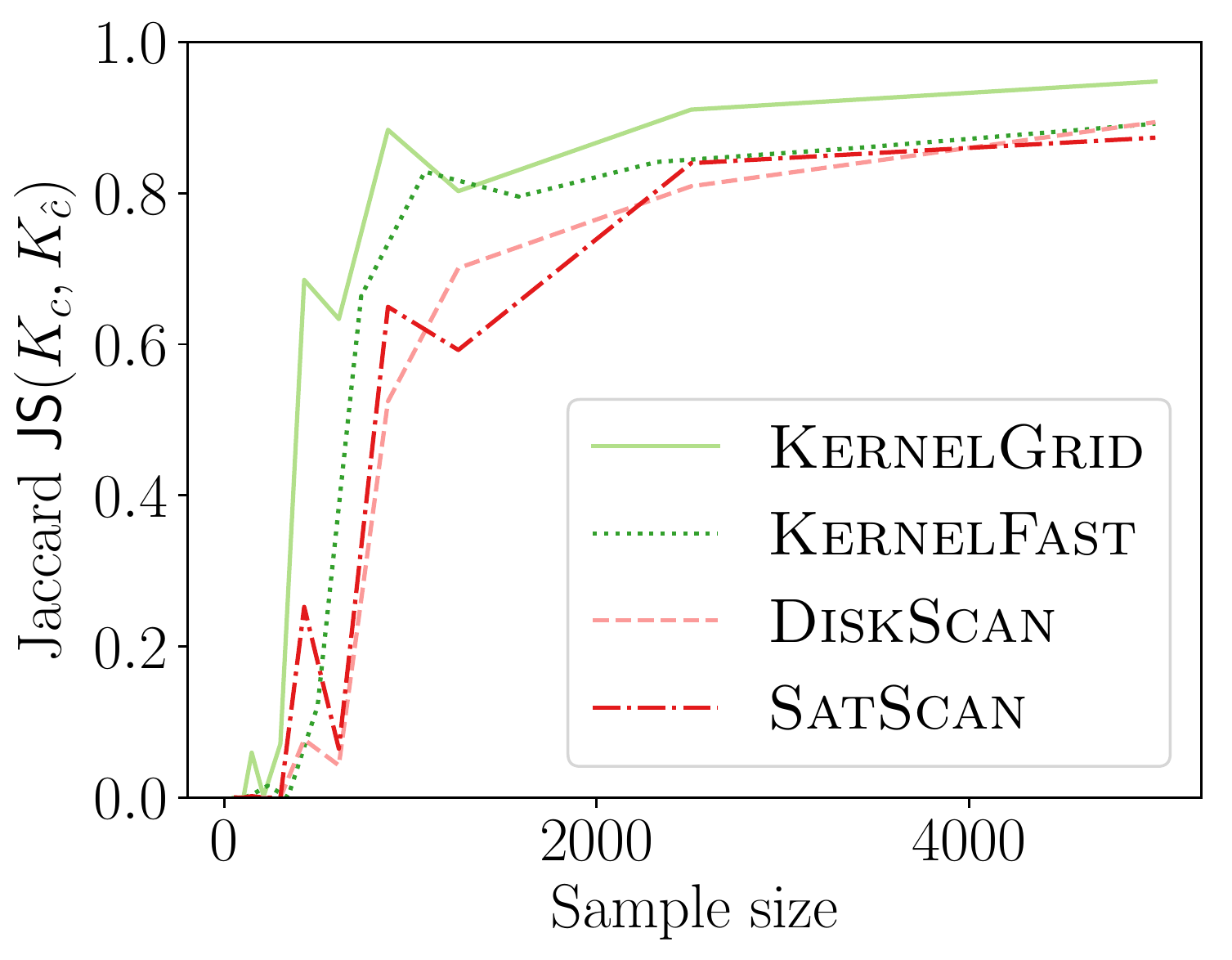}

\vspace{-5mm}
\caption{\label{fig:comparison}
New KSSS vs. Disk SSS algorithms via statistican power from sample size using $\|c-\hat c\|$, $\Phi(K_{\hat c})$, and $\mathsf{JS}(K_c, K_{\hat c})$. }
\end{figure*}

\begin{figure*}
\includegraphics[width=.3\linewidth]{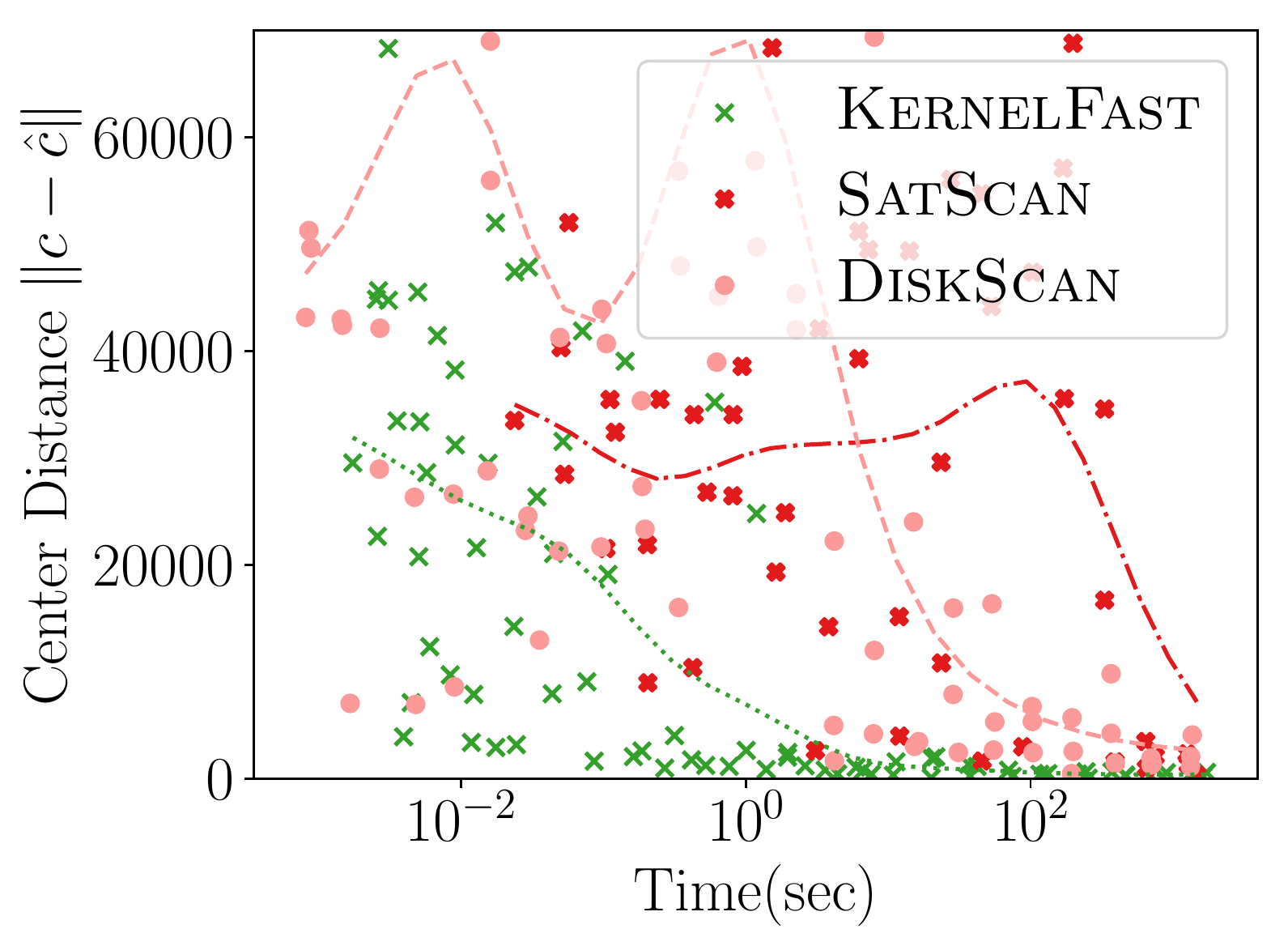}
\includegraphics[width=.3\linewidth]{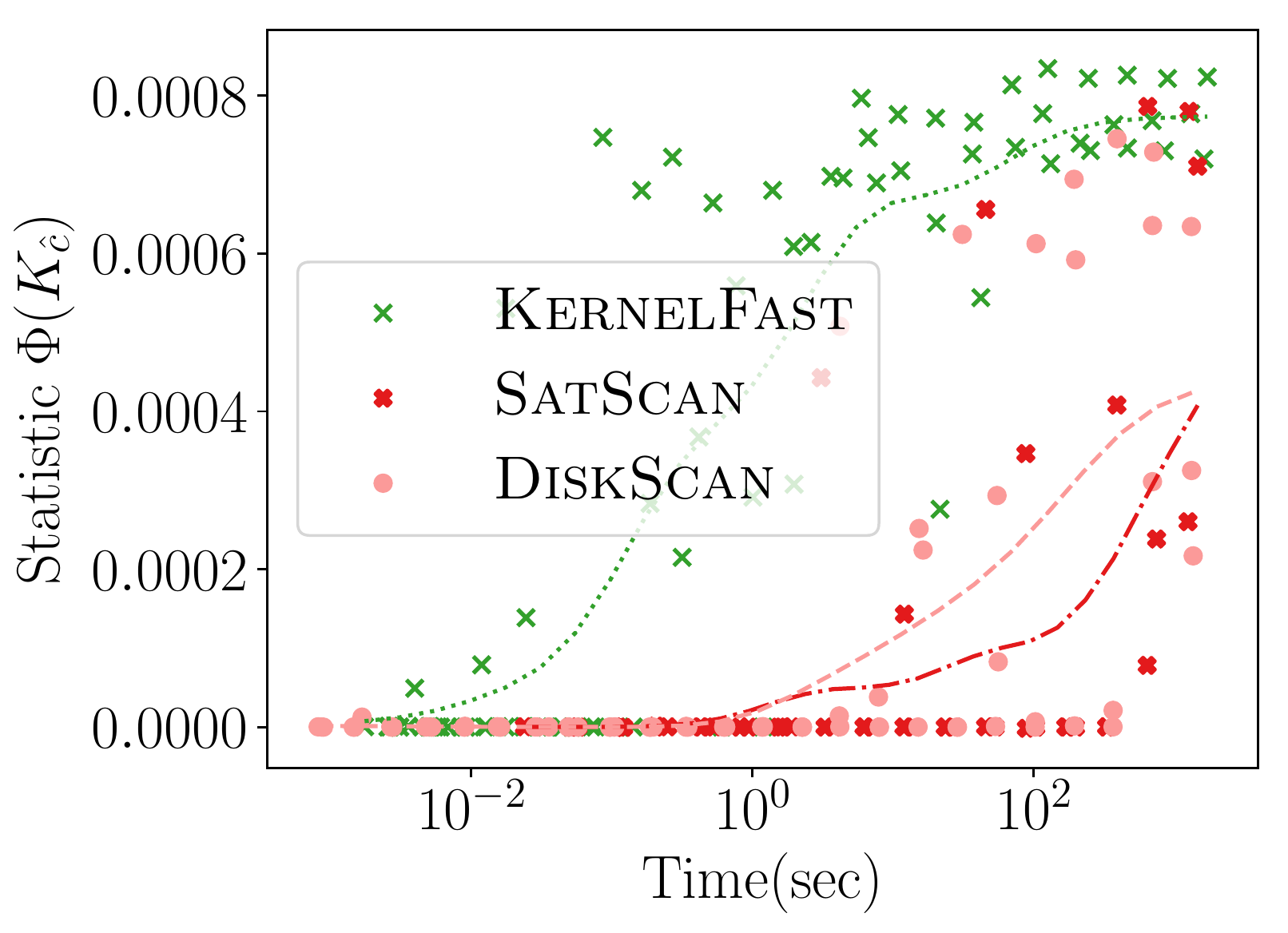}
\includegraphics[width=.3\linewidth]{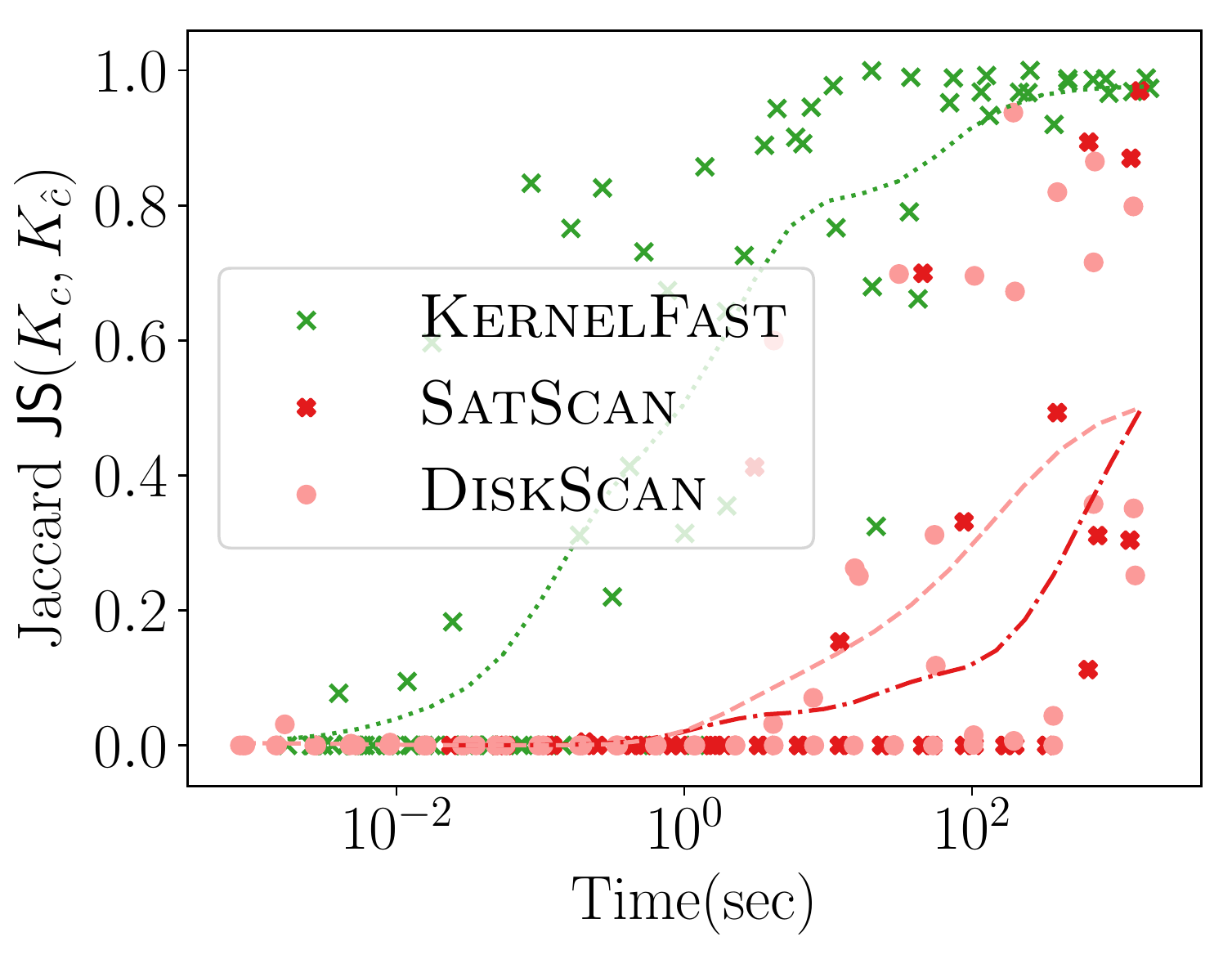}

\vspace{-5mm}
\caption{\label{fig:power-time}
Accuracy measures as a function of runtime using $\|c-\hat c\|$, $\Phi(K_{\hat c})$, and $\mathsf{JS}(K_c, K_{\hat c})$.}
\end{figure*}

\subsection{Comparing New KSSS Algorithms} 
We first compare the new KSSS algorithms against each other, as we increase the sample size $|B_\eps|$ and the corresponding other griding and pruning parameters to match the expected error $\eps$ from sample size $|B_\eps|$ as dictated in Section \ref{sec:algorithm-description}.

We observe in Figure \ref{fig:measure1-kernel} that all of the new KSSS algorithms achieve high power at about the same rate.  In particular, when the sample size reaches about $|B_\eps| = 1{,}000$, they have all plateaued near their best values, with large power:  the center distance is close to $0$, $\Phi(K_{\hat c})$ near maximum, and $\mathsf{JS}(K_c, K_{\hat c})$ almost $0.9$.  
At medium sample sizes $|B_\eps| = 500$, \textsc{KernelAdaptive} and \textsc{KernelFast} have worse accuracy, yet reach maximum power around the same sample size -- so for very small sample size, we recommend \textsc{KernelPrune}.  

In Figure \ref{fig:time-kernel} we see that the improvements from \textsc{KernelGrid} up to \textsc{KernelFast} are tremendous; a speed-up of roughly $20$x to $30$x improvement.  By considering \textsc{KernelPrune} and \textsc{KernelAdaptive} we see most of the improvement comes from the adaptive gridding, but the pruning is also important, itself adding $2$x to $3$x speed up.


\vspace{-1mm}
\subsection{Power vs. Sample Size}  
We next compare our KSSS algorithms against existing, standard Disk SSS algorithms.  As comparison, we first consider a fast reimplementation of SatScan~\cite{Kul97,Kul7.0} in C++.  To make-even the comparison, we consider the exact same center set (defined on grid $G_\eps$) for potential epicenters, and consider all possible radii of disks.  
Second, we compare against a heavily-optimized \textsc{DiskScan} algorithm~\cite{MP18b} for Disks, which chooses a very small ``net'' of points to combinatorially reduce the set of disks scanned, but still guarantee $\eps$-accuracy (in some sense similar to our adaptive approaches).  For these algorithms we maximize Kuldorff's Bernoulli likelihood function~\cite{Kul97}, whose $\log$ has a closed for over binary ranges $D \in \D$.  



Figure \ref{fig:comparison} shows the power versus sample size (representing how many data points are available), using the same metrics as before.  The KSSS algorithms perform consistently significantly better -- to see this consider a fixed $y$ value in each plot.  
For instance the KSSS algorithms reach $\|c-\hat c\| < 0.05$, $\Phi(K_{\hat c}) > 0.003$ and $\mathsf{JS}(K_c,K_{\hat c}) > 0.8$ after about 1000 data samples, whereas it takes the Disk SSS algorithms about 2500 data samples.  


\vspace{-1mm}
\subsection{Power vs. Time} 
We next measure the power as a function of the runtime of the algorithms, again the new KSSS algorithms versus the traditional Disk SSS algorithms.  We increase the sample size $|B_\eps|$ as before, now from the Chicago dataset, and adjust other error parameters in accordance to match the theoretical error.  

Figure \ref{fig:power-time} shows \textsc{KernelFast} significantly outperforms \textsc{SatScan} and \textsc{DiskScan} in these measures in orders of magnitude less time.  
It efficiently reaches small distance to the planted center faster (10 seconds vs 1000 or more seconds). In 5 seconds it achieves $\Phi^*$ of 0.0006, and 0.00075 in 100 seconds; whereas in 1000 seconds the Disk SSS only reaches 0.0004. 
Similarly for Jaccard similarity, \textsc{KernelFast} reaches 0.8 in 5 seconds, and 0.95 in 100 seconds; whereas in 1000 seconds the Disk SSS algorithms only reach 0.5.





\begin{figure}
\includegraphics[width=.49\linewidth]{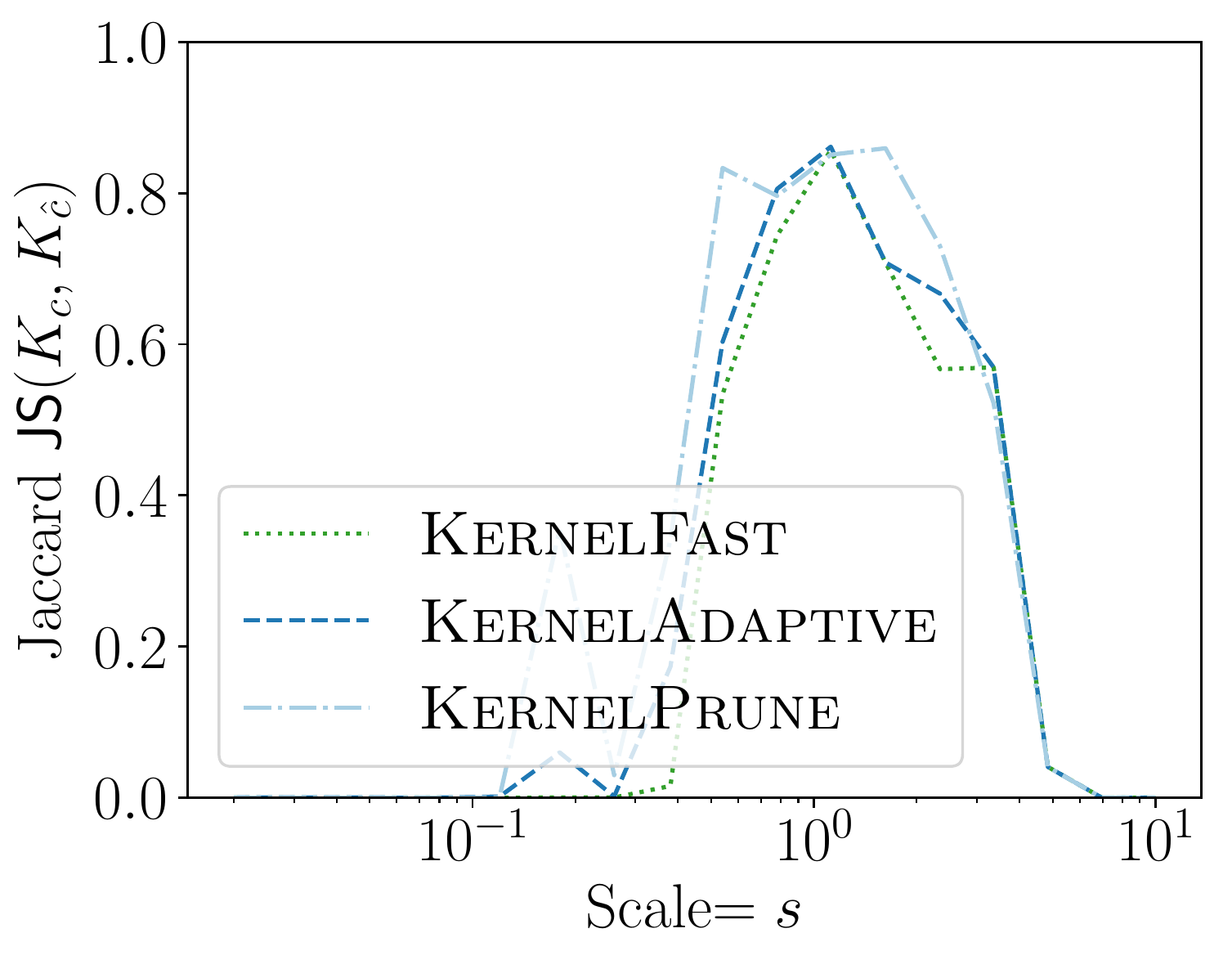}
\includegraphics[width=.49\linewidth]{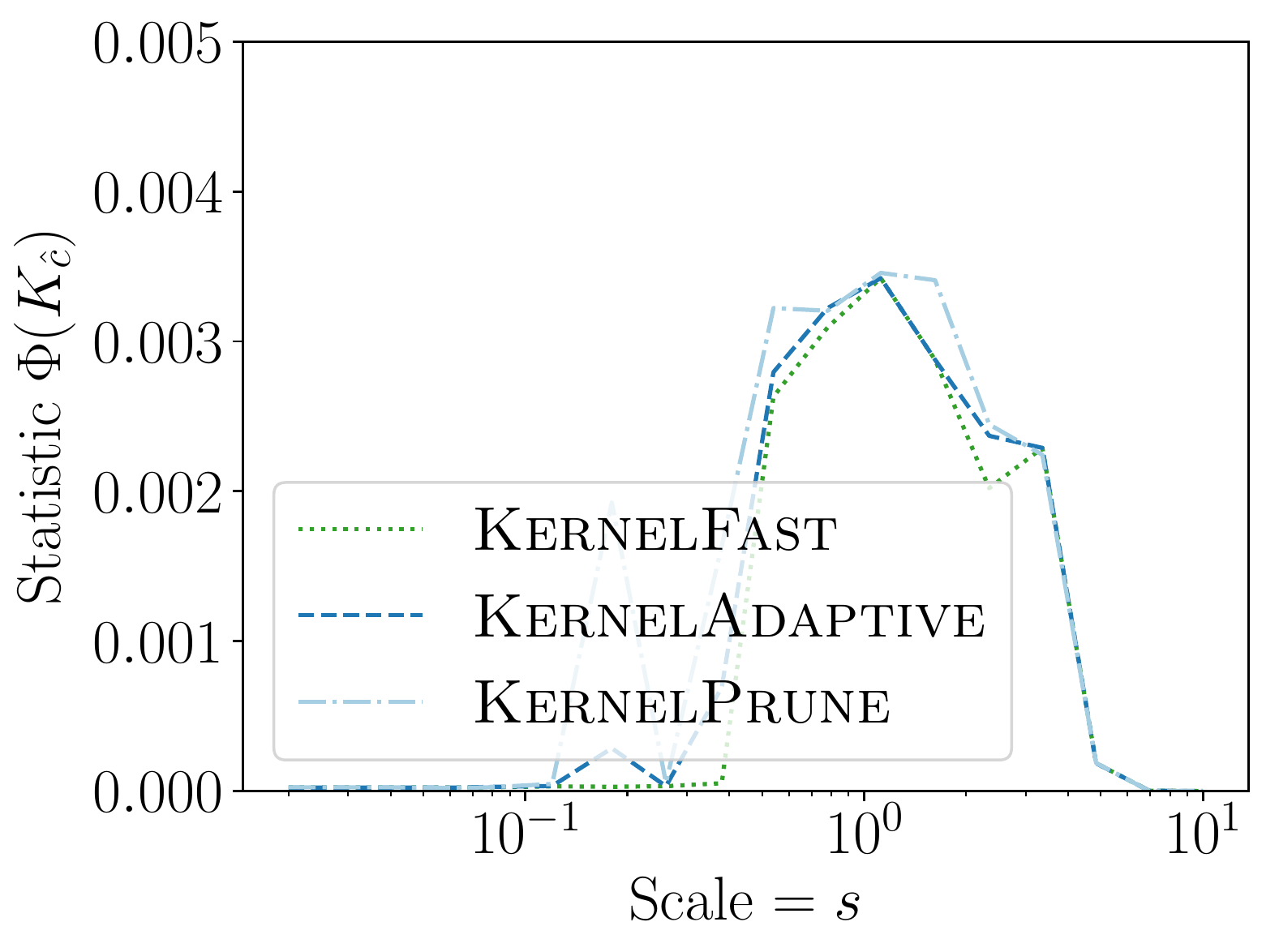}

\vspace{-5mm}
\caption{\label{fig:bandwidth-sens}
Accuracy on bandwidth $r$ of planted region.}    
\end{figure}
\subsection{Sensitivity to Bandwidth}
\label{sec:bandwidth}
So far we chose $r$ to be the bandwidth of the planted anomaly $K_c \in \K_r$ (this is natural if we know the nature of the event).  But if the true anomaly bandwidth is not known or only known in some range then our method should be insensitive to this parameter.  
On the Philadelphia dataset we consider $30$ geometrically increasing bandwidths scaled so for original bandwidth $r$ we considered $r s$ where $s \in [10^{-2}, 10]$.  
In Figure \ref{fig:bandwidth-sens} we show the accuracy using Jaccard similarity and the $\Phi$-value found, over $20$ trials.  Our KSSS algorithms are effective at fitting events with $s \in [0.5, 2]$, indicating quite a bit of lee-way in which $r$ to use.  That is, \emph{the sample complexity would not change}, but the time complexity may increase by a factor of only $2$x - $5$x if we also search over a range of $r$.  

\section{Conclusion}
In this work, we generalized the spatial scan statistic so that ranges can be more flexible in their boundary conditions.  In particular, this allows the anomalous regions to be defined by a kernel, so the anomaly is most intense at an epicenter, and its effect decays gradually moving away from that center.  However, given this new definition, it is no longer possible to define and reason about a finite number of combinatorially defined anomalous ranges.  Moreover, the log-likelihood ratio test derived do not have closed form solutions and as a result we develop new algorithmic techniques to deal with these two issues.  These new algorithms are guaranteed to approximately detect the kernel range which maximizes the new discrepancy function up to any error precision, and the runtime depends only on the error parameter.  
We also conducted controlled experiments on planted anomalies which conclusively demonstrated that our new algorithms can detect regions with few samples and in less time than the traditional disk-based combinatorial algorithms made popular by the SatScan software.  That is, we show that the newly proposed Kernel Spatial Scan Statistics theoretically and experimentally outperform the existing Spatial Scan Statistic methods.



\vspace{-2mm}
\bibliographystyle{abbrv}
\bibliography{discrepancy}


\clearpage

\appendix
\section{Approximating the Bernoulli KSSS}
\label{sec:proofs}



In the following sections, we focus on the Bernoulli model and for notational simplicity use $\Phi = \Phi^\Ber$, $\ell = \ell^\Ber$. Sometimes we also use $\Phi_{p,q}(K) = \Phi(p,q,K)$. 

The values of $p$ and $q$ are bounded between $0$ and $1$, but at these extremes $\Phi_{p,q}(K)$ can be unbounded.  
Instead we will bound structural properties of $\Phi$, by assuming $p=p^*,q=q^*$.  

\begin{lemma}
\label{lem:conservation}
When $(p=p^*,q=q^*) = \argmax_{p,q} \Phi(p,q,K)$ then
\[
    |B| = \sum_{x \in B \setminus M} \frac{1}{1 - g(x)} 
\;\;\;\; \text{ and } \;\;\;\;
    |B| = \sum_{x \in M} \frac{1}{g(x)}.  
\]
\end{lemma}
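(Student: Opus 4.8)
The plan is to exploit the first-order optimality conditions at the maximizer $(p^*,q^*)$. Since $\Phi(p,q,K)$ and $\ell(p,q,K)$ differ only by the constant ${\ell_0^\Ber}^*$ (which is free of $p,q$), their argmax coincides, so I work directly with $\ell$. Using $\partial g(x)/\partial p = K(x)$ and $\partial g(x)/\partial q = 1-K(x)$, the two stationarity conditions at $(p^*,q^*)$ are $\partial_p \ell = \sum_{x\in M} \frac{K(x)}{g(x)} - \sum_{x \in B\setminus M}\frac{K(x)}{1-g(x)} = 0$ and $\partial_q \ell = \sum_{x\in M}\frac{1-K(x)}{g(x)} - \sum_{x\in B\setminus M}\frac{1-K(x)}{1-g(x)} = 0$, where here $g(x) = p^* K(x) + q^*(1-K(x))$.

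The key step, and the one piece of genuine insight, is to form the weighted combination $p^*\cdot\partial_p\ell + q^*\cdot\partial_q\ell = 0$. Because $p^* K(x) + q^*(1-K(x)) = g(x)$ is exactly the mean rate, every numerator collapses to $g(x)$, giving $\sum_{x\in M}\frac{g(x)}{g(x)} - \sum_{x\in B\setminus M}\frac{g(x)}{1-g(x)} = 0$, that is, $|M| = \sum_{x\in B\setminus M}\frac{g(x)}{1-g(x)}$. Rewriting $\frac{g(x)}{1-g(x)} = \frac{1}{1-g(x)} - 1$ and using $|M| + |B\setminus M| = |B|$ then yields the first identity $\sum_{x\in B\setminus M}\frac{1}{1-g(x)} = |B|$.

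For the second identity I would instead add the two stationarity conditions directly: since $K(x) + (1-K(x)) = 1$, the $K(x)$ weights cancel and this gives $\sum_{x\in M}\frac{1}{g(x)} = \sum_{x\in B\setminus M}\frac{1}{1-g(x)}$. Combined with the first identity just established, the right-hand side equals $|B|$, so $\sum_{x\in M}\frac{1}{g(x)} = |B|$ as well. The only subtlety to address is justifying that the maximizer is interior so that the gradient vanishes there (the preceding remark notes $\Phi$ can only blow up at the boundary of $[0,1]^2$); granting that, the argument is a short computation, and the crux is spotting the weighting by $(p^*,q^*)$ that turns the combination of partial derivatives back into the mean rate $g(x)$.
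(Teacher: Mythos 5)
Your proof is correct and follows essentially the same route as the paper: both derive the identities from the first-order conditions $\partial_p\ell=\partial_q\ell=0$ and the key combination $p^*\partial_p\ell + q^*\partial_q\ell=0$, which collapses the numerators back to $g(x)$. The only (harmless) difference is in the second identity, where the paper uses the combination weighted by $(1-p^*,1-q^*)$ while you simply add the two stationarity conditions and invoke the first identity; both are one-line variants of the same idea.
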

\begin{proof} 
Consider first the derivatives of the alternative hypothesis $\ell = \ell(p,q,K)$ with respect to $p$ and $q$.
\begin{align*}
    \frac{\dir \ell}{\dir p} =& \sum_{x \in M} \frac{K(x)}{g(x)} - \sum_{x \in B \setminus M} \frac{K(x)}{1 - g(x)} 
    \\
    \frac{\dir \ell}{\dir q} =& \sum_{x \in M} \frac{1 - K(x)}{g(x)} - \sum_{x \in B \setminus M} \frac{1 - K(x)}{1 - g(x)}
\end{align*}
from this we can see that at the maximum $p^*,q^*$, when $g(x) = p^*K(x) + q^*(1-K(x))$ then
\[
    \frac{\dir \ell}{\dir p}p^* + \frac{\dir \ell}{\dir q}q^* 
    = 
    \sum_{x \in M} \frac{g(x)}{g(x)} - \sum_{x \in B \setminus M} \frac{g(x)}{1 - g(x)}
    =
    0
\]
Since
$ \sum_{x \in M} \frac{g(x)}{g(x)} =  |M|$, hence $\sum_{x \in B \setminus M} \frac{g(x)}{1 - g(x)} = |M|$
and then we can derive
\[
|M| 
= 
\sum_{x \in B \setminus M} \frac{1 + g(x) - 1}{1 - g(x)} 
= 
\sum_{x \in B \setminus M} \frac{1 }{1 - g(x)} - |B \setminus M|.  
\]
As a result
\[
    |B| = \sum_{x \in B \setminus M} \frac{1}{1 - g(x)} 
\]

Similarly
\begin{align*}
\frac{\dir \ell}{\dir p}(1 - p^*) + \frac{\dir \ell}{\dir q}(1- q^*) =& \sum_{x \in M} \frac{1 - g(x)}{g(x)} - \sum_{x \in B \setminus M} \frac{1 - g(x)}{1 - g(x)}
\end{align*}
and following similar steps we ultimately find
\[
|B| = \sum_{x \in M} \frac{1}{g(x)}.  \qedhere
\]
\end{proof}

This implies simple bounds on $g(x)$ as well.  
\begin{lemma}\label{lem:gx-bounds}
When $(p=p^*,q=q^*) = \argmax_{p,q} \Phi(p,q,K)$ then
\[
     g(x) \in \left [\frac{1}{|B|}, 1 \right] \text{if $x \in M$} 
     \;\; \text{ and } \;\;
      g(x) \in \left [0, \frac{|B| - 1}{|B|} \right] \text{if $x \in B \setminus M$}
\]
\end{lemma}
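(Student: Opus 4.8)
The plan is to read off both bounds directly from the two conservation identities established in Lemma~\ref{lem:conservation}, namely $|B| = \sum_{x \in M} \frac{1}{g(x)}$ and $|B| = \sum_{x \in B \setminus M} \frac{1}{1-g(x)}$, together with the elementary observation that $g(x) = p^* K(x) + q^*(1-K(x))$ is a convex combination of $p^*, q^* \in [0,1]$ and hence $g(x) \in [0,1]$. The two halves of the statement (one for $x \in M$, one for $x \in B \setminus M$) are symmetric, so I would carry out one in detail and indicate that the other follows by the same argument applied to the second identity.

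The central step is this: at the optimum, the log-likelihood is finite, which forces $g(x) > 0$ for every $x \in M$ (otherwise $\log g(x) = -\infty$) and $g(x) < 1$ for every $x \in B \setminus M$. Hence each summand $\frac{1}{g(x)}$ in the first identity is a strictly positive, finite number, and the whole collection sums to $|B|$. Since every term in a sum of nonnegative reals is at most the total, I would conclude that for each fixed $x \in M$,
\[
\frac{1}{g(x)} \;\le\; \sum_{x' \in M} \frac{1}{g(x')} \;=\; |B|,
\]
which rearranges to $g(x) \ge \frac{1}{|B|}$. Combined with the trivial upper bound $g(x) \le 1$ from the convex-combination observation, this gives $g(x) \in [\frac{1}{|B|}, 1]$ for $x \in M$. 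The same reasoning applied to $\sum_{x \in B \setminus M} \frac{1}{1-g(x)} = |B|$ yields $\frac{1}{1-g(x)} \le |B|$, i.e.\ $1 - g(x) \ge \frac{1}{|B|}$, so $g(x) \le \frac{|B|-1}{|B|}$; paired with the trivial lower bound $g(x) \ge 0$ this establishes $g(x) \in [0, \frac{|B|-1}{|B|}]$ for $x \in B \setminus M$.

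There is essentially no deep obstacle here, since Lemma~\ref{lem:conservation} does the heavy lifting; the only point requiring care is justifying that the reciprocals are well defined and nonnegative at the optimizer, which I would dispatch by noting that the optimal $(p^*,q^*)$ cannot drive any $g(x)$ to the boundary value that makes the objective $-\infty$. Once that is in place, the bounds are immediate from the ``one term is at most the sum'' inequality and the convexity of $g$ in its defining parameters.
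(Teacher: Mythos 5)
Your proposal is correct and follows essentially the same route as the paper: both read the bounds $g(x)\ge \frac{1}{|B|}$ (for $x\in M$) and $1-g(x)\ge\frac{1}{|B|}$ (for $x\in B\setminus M$) directly off the two conservation identities of Lemma~\ref{lem:conservation} via the ``one nonnegative term is at most the sum'' inequality. Your added remarks on the positivity of the summands and the trivial endpoints $g(x)\in[0,1]$ from the convex-combination form are fine and only make explicit what the paper leaves implicit.
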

\begin{proof}
From Lemma \ref{lem:conservation} we can state that 
\begin{align*}
     |B| =  \sum_{x \in M} \frac{1}{g(x)} \ge \frac{1}{g(x)} 
\end{align*}
for any $x \in M$ and therefore $g(x) \ge \frac{1}{|B|}$ for $x \in M$. In the case that $x \in B$ we have that 
\begin{align*}
     |B| =  \sum_{x \in B \setminus M} \frac{1}{1 - g(x)} \ge \frac{1}{1 - g(x)} 
\end{align*}
for any $x\in B$. Therefore $1 - g(x) \ge \frac{1}{|B|}$ or $\frac{|B| - 1}{|B|} \ge g(x)$.
\end{proof}

\subsection{Spatial Approximation of the KSSS}
\label{sec:spatial-approx}
In this section we define useful lemmas that can be used to spatially approximate the KSSS by either ignoring far away points or restricting the set of centers spatially.  

\Paragraph{Truncating Kernels}
We argue here that we can consider a simpler set of truncated kernels $\K_{r,\eps}$ in place of $\K_r$ so replacing at $K_c \in \K_r$ with a corresponding $K'_c \in \K_{r,\eps}$ without affecting $\ell(p,q,K)$ and hence $\Phi(K)$ by more than and additive $\eps$-error.  Specifically, define any $K'_c \in \K_{r,\eps}$ using $r_\text{max} = r \sqrt{\log (|B|/\eps)}$ as
\[
K'_c(x) = \begin{cases}
K_c(x) = \exp(-\|x-c\|^2/r^2) & \text{ if } \|x-c\| \leq r_\text{max}
\\
0 & \text{ otherwise.}
\end{cases}
\]

\begin{lemma}
\label{lem:truncate}
For any data set $B$, center $c \in \R^d$, and error $\eps >0$
\[
| \Phi(K_c) - \Phi(K'_c) | \leq \eps.
\]
\end{lemma}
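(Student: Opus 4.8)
The plan is to reduce the statement to a bound on the log-likelihood at its maximizer, and then convert the tail mass discarded by truncation into an additive $\eps$ using the conservation identity of Lemma~\ref{lem:conservation}. First note that $\Phi(K_c) = \max_{p,q}\ell(p,q,K_c) - \ell_0^*$ and $\Phi(K'_c) = \max_{p,q}\ell(p,q,K'_c) - \ell_0^*$ share the same null term $\ell_0^*$, so it cancels and it suffices to bound the difference of the two maxima. Writing $(p^*,q^*)$ for the maximizer of $\ell(\cdot,\cdot,K_c)$ and using $\max_{p,q}\ell(p,q,K'_c) \ge \ell(p^*,q^*,K'_c)$, I get
\[
\Phi(K_c) - \Phi(K'_c) \le \ell(p^*,q^*,K_c) - \ell(p^*,q^*,K'_c),
\]
and symmetrically, using the maximizer $(p'^*,q'^*)$ of $\ell(\cdot,\cdot,K'_c)$, a matching bound on $\Phi(K'_c) - \Phi(K_c)$. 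This anchoring at the respective optima is essential, since it is what lets me invoke Lemmas~\ref{lem:conservation} and \ref{lem:gx-bounds} (which only hold there); a uniform-in-$(p,q)$ bound is hopeless because $\ell$ is unbounded on the boundary of the parameter square.

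Next I quantify the pointwise effect of truncation. For $\|x-c\| \le r_\text{max}$ we have $K'_c(x) = K_c(x)$ and the two summands agree, while for $\|x-c\| > r_\text{max} = r\sqrt{\log(|B|/\eps)}$ the definition gives $K_c(x) = \exp(-\|x-c\|^2/r^2) < \eps/|B|$ and $K'_c(x)=0$. Hence, writing $g(x)=p K_c(x)+q(1-K_c(x))$ and $g'(x)=q$ for these far points, the per-point change is $|g(x)-g'(x)| = |p-q|\,K_c(x) \le \eps/|B|$, and under the WLOG convention $p^* \ge q^*$ we have $g(x) \ge g'(x)$ for $x \in M$ and $g(x) \le g'(x)$ for $x \in B \setminus M$.

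The core step bounds $\ell(p^*,q^*,K_c) - \ell(p^*,q^*,K'_c)$. Only far points contribute, and I discard the sign-unfavorable ones, keeping $\sum_{x \in M,\,\text{far}} \big(\log g(x) - \log g'(x)\big)$. I write each term as $-\log(1 - \delta_x/g(x))$ with $\delta_x = g(x)-g'(x) \in [0,\eps/|B|]$; since $g(x) \ge 1/|B|$ for $x\in M$ by Lemma~\ref{lem:gx-bounds}, $\delta_x/g(x) \le \eps \le 1/2$, so $-\log(1-u)\le 2u$ gives $\log g(x) - \log g'(x) \le 2\delta_x/g(x) \le (2\eps/|B|)(1/g(x))$. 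The crucial point is that the denominator is $g(x)$, the untruncated value governed by conservation, so summing and applying $\sum_{x\in M} 1/g(x) = |B|$ from Lemma~\ref{lem:conservation} yields a bound of $2\eps$. The other direction is entirely symmetric: using the maximizer of $K'_c$, I keep the $B \setminus M$ terms, bound $\log(1-g'(x)) - \log(1-g(x)) \le 2\delta_x/(1-g'(x))$ with $1-g'(x) \ge 1/|B|$ from Lemma~\ref{lem:gx-bounds}, and apply the second identity $\sum_{x\in B\setminus M} 1/(1-g'(x)) = |B|$.

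The main obstacle is precisely the non-closed-form, boundary-unbounded nature of $\ell$: I cannot control $|\ell(p,q,K_c)-\ell(p,q,K'_c)|$ uniformly, so I must evaluate at the optima and arrange each linearized term to carry the denominator ($g(x)$ or $1-g'(x)$) that is pinned by a conservation identity rather than the uncontrolled $q^*$. This is what turns the per-point tail bound $\eps/|B|$, summed over up to $|B|$ points, into a total of $O(\eps)$ instead of the naive $\Omega(\eps|B|)$. The leftover factor of $2$ is cosmetic and is absorbed by taking $r_\text{max} = r\sqrt{\log(2|B|/\eps)}$ (or a marginally tighter logarithm bound), giving the clean statement $|\Phi(K_c)-\Phi(K'_c)| \le \eps$.
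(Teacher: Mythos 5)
Your proof is correct, and while it rests on the same two pillars as the paper's --- the tail bound $K_c(x)\le \eps/|B|$ beyond $r_\text{max}$, hence $|g(x)-g'(x)|\le \eps/|B|$, together with the lower bounds $g(x)\ge 1/|B|$ on $M$ and $1-g(x)\ge 1/|B|$ on $B\setminus M$ from Lemma~\ref{lem:gx-bounds} --- your aggregation step is genuinely different. The paper linearizes each $\log(\cdot)$ term at $1/|B|$ to get a worst-case per-term change of $\eps$, and then relies on the $1/|B|$ normalization of $\ell$ (as written in the appendix) so that the average of $|B|$ such terms still changes by at most $\eps$. You instead keep the denominators $g(x)$ (resp.\ $1-g'(x)$) inside the sum and collapse them with the conservation identities $\sum_{x\in M}1/g(x)=|B|$ and $\sum_{x\in B\setminus M}1/(1-g(x))=|B|$ of Lemma~\ref{lem:conservation}, which bounds the \emph{unnormalized} sum by $2\eps$; this is the stronger statement, and it is the one actually needed if $\Phi$ is taken with the unnormalized log-likelihood of Section~\ref{sec:model}. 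You are also more careful on two points the paper glosses over: (i) the reduction $\max_{p,q}\ell(p,q,K_c)-\max_{p,q}\ell(p,q,K'_c)\le \ell(p^*,q^*,K_c)-\ell(p^*,q^*,K'_c)$, with the symmetric bound anchored at the maximizer of the truncated problem, which is what legitimizes invoking Lemmas~\ref{lem:conservation} and~\ref{lem:gx-bounds} (they hold only at the respective optima, and for the reverse direction you correctly apply them to $K'_c$ at its own maximizer); and (ii) the sign analysis that lets you discard the favorable terms rather than bounding both sums. The only costs are the harmless assumption $\eps\le 1/2$ needed for $-\log(1-u)\le 2u$ and the factor of $2$, which you rightly absorb into $r_\text{max}$.
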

\begin{proof}
The key observation is that for $\|x-c\| > r_\text{max}$ then $K(x) \leq \exp(-\log(|B|/\eps)) = \eps/|B|$.  Since $0 \leq q,p \leq 1$, then 
\[
|g(x) - g'(x)| \leq \eps/|B|
\]
as well; where $g'(x) = q + (p-q)K'(x)$.  

Next we rewrite $\ell(p,q,K)$ as
\[
\ell(p,q,K) 
= 
\frac{1}{|B|}\sum_{x \in B} \log \left( \begin{cases} g(x) & \text{ if } m(x)=1 \\  (1-g(x)) & \text{ if } m(x)=0 \end{cases} \right).
\]
Thus since $g(x) > (1/|B|)$ for $x\in M$ and $1 - g(x) >  (1/|B|)$ for $x \in B \setminus M$ by Lemma \ref{lem:gx-bounds}, then we can analyze each term of the average as $\log(\cdot)$ of a quantity at least $1/|B|$.  If that quantity changes by at most $\eps/|B|$, then that term changes by at most 
\[
\log(1/|B| + \eps/|B|) - \log(1/|B|) \leq (\eps / |B|) |B| = \eps,
\]
by taking the derivative of $\log$ at $1/|B|$.  
Hence each term changes by at most $\eps$, and $\ell(p,q,K)$ takes the average over these terms, so the main claim holds.  
\end{proof}
\Paragraph{Center Point Lipschitz Bound}
Next we show that $\Phi(K_c)$ is stable with respect to small changes in its epicenter $c$


\begin{lemma}
\label{lem:spatial-lipshitz}
The magnitude of the gradient with respect to the center $c$ of $\Phi(K_c)$ for any considered $K_c \in \K_r$ is bounded by 
\[
|\nabla_{c} \Phi(K_c)|  \le  (1/r) \sqrt{8/e}.  
\]
\end{lemma}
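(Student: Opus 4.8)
The plan is to use the envelope theorem to reduce the gradient of $\Phi$ in $c$ to the explicit $c$-dependence at the optimal rates, then bound a single kernel's spatial gradient uniformly, and finally invoke the conservation identities of Lemma~\ref{lem:conservation} to cancel the dependence on $|B|$. Throughout I take $\Phi$ in the normalized form used in Lemma~\ref{lem:truncate}, i.e. carrying the $\tfrac{1}{|B|}$ prefactor, which is what makes an $|B|$-free bound possible.

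First I would note that $\Phi(K_c)=\max_{p,q}\ell(p,q,K_c)$ minus a term free of $c$, so $\nabla_c\Phi(K_c)$ equals the partial gradient $\nabla_c\ell$ evaluated at the maximizer $(p^*,q^*)$. This is the envelope theorem: differentiating $\ell(p^*(c),q^*(c),K_c)$ through $c$ produces the chain-rule terms $\tfrac{\dir \ell}{\dir p}$ and $\tfrac{\dir \ell}{\dir q}$, both of which vanish at the optimum (the maximizer is interior by Lemma~\ref{lem:gx-bounds}), leaving only the explicit $c$-dependence. Since $g(x)=q+(p-q)K_c(x)$ gives $\nabla_c g(x)=(p-q)\nabla_c K_c(x)$, I would write
\[
\nabla_c \Phi(K_c) = \frac{1}{|B|}\left(\sum_{x\in M}\frac{(p-q)\nabla_c K_c(x)}{g(x)} - \sum_{x\in B\setminus M}\frac{(p-q)\nabla_c K_c(x)}{1-g(x)}\right).
\]

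Next I would control a single kernel's spatial gradient. From $K_c(x)=\exp(-\|x-c\|^2/r^2)$ a direct computation gives $\nabla_c K_c(x)=\tfrac{2(x-c)}{r^2}K_c(x)$, whose magnitude is $\tfrac{2\|x-c\|}{r^2}\exp(-\|x-c\|^2/r^2)$. Viewing this as a function of $u=\|x-c\|$, the scalar $u\exp(-u^2/r^2)$ is maximized at $u=r/\sqrt2$ with value $\tfrac{r}{\sqrt2}e^{-1/2}$, so that
\[
|\nabla_c K_c(x)| \le \frac{1}{r}\sqrt{\frac{2}{e}}
\]
uniformly in $x$.

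The final and most delicate step is keeping the bound independent of $|B|$. Applying the triangle inequality to the vector sum and replacing each $|\nabla_c K_c(x)|$ by its uniform bound $\tfrac{1}{r}\sqrt{2/e}$ leaves the sums $\sum_{x\in M}\tfrac{1}{g(x)}$ and $\sum_{x\in B\setminus M}\tfrac{1}{1-g(x)}$. By Lemma~\ref{lem:conservation} each of these equals exactly $|B|$, canceling the $\tfrac{1}{|B|}$ prefactor; together with $p-q\le 1$ this yields $|\nabla_c\Phi(K_c)|\le 2(p-q)\tfrac{1}{r}\sqrt{2/e}\le \tfrac{1}{r}\sqrt{8/e}$. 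The main obstacle is precisely this cancellation: a naive triangle-inequality bound grows linearly in $|B|$, and only the first-order conditions at the optimum (in the guise of Lemma~\ref{lem:conservation}) tame the sum. I would therefore be careful to hold the $|\nabla_c K_c(x)|/g(x)$ terms in exactly the form to which Lemma~\ref{lem:conservation} applies, rather than collapsing the $g(x)$ denominators prematurely.
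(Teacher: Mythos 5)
Your proposal is correct and follows essentially the same route as the paper: bound $|\nabla_c K_c(x)|$ by $\tfrac{1}{r}\sqrt{2/e}$ at $\|x-c\|=r/\sqrt{2}$, kill the $\tfrac{\dir \ell}{\dir p}\tfrac{\dir p^*}{\dir t}$ and $\tfrac{\dir \ell}{\dir q}\tfrac{\dir q^*}{\dir t}$ terms via first-order conditions, and use Lemma~\ref{lem:conservation} to collapse $\sum_{x\in M}1/g(x)$ and $\sum_{x\in B\setminus M}1/(1-g(x))$ to $|B|$ each. The only place the paper does more work than your envelope-theorem appeal is in explicitly verifying that $\tfrac{\dir p^*}{\dir t}$ and $\tfrac{\dir q^*}{\dir t}$ are bounded (by differentiating the conservation identity), which is the regularity hypothesis your envelope-theorem step implicitly assumes.
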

\begin{proof}
We take the derivative of $c$ in direction $\vec{u}$ (a unit vector in $\mathbb{R}^2$) at magnitude $t$.  
First analyze the Gaussian kernel under such a shift as $K_c(x) = \exp(-\| c -x + t \vec{u}\|^2/r^2)$, so as $t \to 0$ we have
\[
\left|\frac{\dir K_c(x)}{\dir t}\right| 
= 
\left|\frac{2\langle \vec{u}, (c - x) \rangle}{r^2} \exp \left (-\frac{\|c - x\|^2}{r^2} \right )\right|
\]
This maximized for $\vec{u} = (c - x) / \|c - x\|$ so
\[
\left|\frac{\dir K_c(x)}{\dir t}\right| 
\le 
\left|\frac{2(\|c - x\|)}{r^2} \exp \left (-\frac{(\|c - x\|)^2}{r^2} \right )\right|
\leq
\frac{1}{r} \sqrt{\frac{2}{e}},
\]
since it is maximized at $\|c-x\|/r = \sqrt{1/2}$.  

Now examine $\frac{\dir \Phi(K_c)}{\dir t} = \frac{\dir \ell(p^*,q^*,K_c)}{\dir t}$ which expands to  
\[
\frac{\dir \Phi(K_c)}{\dir t} 
 = \sum_{x \in B} \frac{\dir \Phi(K_c)}{\dir K_c(x)} \frac{\dir K_c(x)}{\dir t}  + \frac{\dir \Phi(K_c)}{\dir p^*}\frac{\dir p^*}{\dir t}  + \frac{\dir \Phi(K_c)}{ \dir q^*} \frac{\dir q^*}{\dir t}. 
\]

Since at $p=p^*,q=q^*$ both $\frac{\dir \Phi(K_c)}{\dir p} = 0$ and $\frac{\dir \Phi(K_c)}{ \dir q} = 0$, then as long as $\frac{\dir p^*}{\dir t}$ and $\frac{\dir q^*}{\dir t}$ are bounded the associated terms are also $0$.  
We bound these by taking the derivative of the equation
$|B| = \sum_{x \in M} \frac{1}{g(x)}$, from Lemma \ref{lem:conservation}. 
with respect to $t$:  
\[
\sum_{x \in M} \frac{1}{g(x)^2} \left ( \frac{\dir p}{\dir t}K_c(x)  + \frac{\dir q}{\dir t} (1 - K_c(x)) +   \frac{\dir K_c(x)}{\dir t} (p^* - q^*) \right ) 
= 
\frac{\dir |B|}{\dir t} = 0.
\]
The first term $\sum_{x \in M} \big|\frac{\dir K_c(x)}{\dir t}\big| \frac{ (p^* - q^*)}{g(x)^2} < C$ for some constant $C$ at $p^*$ and $q^*$ from Lemma \ref{lem:gx-bounds} and since $\frac{\dir K(x)}{\dir t}$ is bounded.
Hence
\begin{align*}
C 
& \ge  
\left | \sum_{x \in M} \frac{1}{g(x)^2} \left ( \frac{\dir p^*}{\dir t}K_c(x)  + \frac{\dir q^*}{\dir t} (1 - K_c(x)) \right ) \right | 
\\ & \ge  
\left | \frac{\dir p^*}{\dir t} \sum_{x \in M} K_c(x)  + \frac{\dir q^*}{\dir t} \sum_{x \in M} (1 - K_c(x)) \right | 
\end{align*}
Therefore if $K_c(x) > 0$ for any $x \in M$ then $\frac{\dir p^*}{\dir t}$ and $\frac{\dir p^*}{\dir t}$ are bounded. This holds for any $K_c$ considered, as otherwise $\Phi = 0$ since the alternative hypothesis $\ell^*(K_c)$ is equal to the null hypothesis $\ell^*$.  




Finally, we bound the magnitude of the first term
\begin{align*}
\left |\frac{\dir \Phi(K_c)}{\dir t} \right | 
 & = 
\left|\frac{\dir \Phi(K_c)}{\dir t} \right| \text{ for fixed } p^*,q^*
\\ & = 
\left|\frac{1}{|B|} \left(\sum_{x \in M} \hspace{-1mm} \frac{\dir \log(g(x))}{\dir t} + \hspace{-3mm} \sum_{x \in B \setminus M}\hspace{-2mm}  \frac{\dir \log(1-g(x))}{\dir t} \right) \right| \text{ fixed } p^*,q^*
\\ & =
 \frac{1}{|B|} \left |\sum_{x \in M} \frac{p^* -q^*}{g(x)}\frac{\dir K_c(x)}{\dir t} -  \hspace{-2mm}\sum_{x \in B \setminus M} \frac{p^* -q^*}{1 - g(x)}\frac{\dir K_c(x)}{\dir t} \right |
\\ &\leq
\frac{1}{|B|} \left(\frac{1}{r}\sqrt{\frac{2}{e}}\right) \left( \left|\sum_{x \in M} \frac{p^* -q^*}{g(x)}\right| +  \left|\sum_{x \in B \setminus M} \frac{p^* -q^*}{1 - g(x)}\right| \right)
\\ & =
\left(\frac{1}{r}\sqrt{\frac{8}{e}}\right), 
\end{align*}
where the last step follows by $p^*-q^* < 1$ and Lemma \ref{lem:conservation}.  
\end{proof}

This bound suggests a further improvement for centers that have few points in their neighborhood. We will state this bound in terms of a truncated kernel, but it also applies to non truncated kernels through Lemma \ref{lem:truncate}.

\begin{lemma}
\label{lem:spatial-lipshitz-adaptive}
For a truncated kernel $K'_c \in \K_{r,\eps}$, if we shift the center $c$ to $c'$ so $\|c-c'\| \leq \beta \leq r_\text{max}$, then 
\[
|\Phi(K'_c) - \Phi(K'_{c'})| \leq \beta \frac{|D \cap B|}{|B|}\frac{2 r_\text{max}}{r^2} + \eps
\]
where $D$ is a disk centered at $c$ of radius $2r_\text{max}$.  
\end{lemma}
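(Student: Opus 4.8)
The plan is to follow the gradient-integration strategy of Lemma~\ref{lem:spatial-lipshitz}, but to exploit the truncation so that the bound depends on the \emph{local} point count $|D \cap B|$ rather than on the worst-case constant that lemma obtains from the global conservation identity. First I would reuse the exact derivative expansion from the proof of Lemma~\ref{lem:spatial-lipshitz}: shifting the center along a unit direction $\vec u$ with magnitude $t$, the terms involving $\frac{\dir p^*}{\dir t}$ and $\frac{\dir q^*}{\dir t}$ vanish by first-order optimality of $(p^*,q^*)$ (which still holds for $K'_c$, since optimality is in $p,q$ for a fixed center), leaving
\[
\frac{\dir \Phi(K'_c)}{\dir t}
=
\frac{1}{|B|}\left(\sum_{x \in M}\frac{p^*-q^*}{g(x)}\frac{\dir K'_c(x)}{\dir t} - \sum_{x \in B\setminus M}\frac{p^*-q^*}{1-g(x)}\frac{\dir K'_c(x)}{\dir t}\right).
\]

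Next I would localize the sums. Because $K'_c$ is truncated at radius $r_\text{max}$, we have $\frac{\dir K'_c(x)}{\dir t}=0$ whenever $\|x-c\|>r_\text{max}$; and since the shift has magnitude at most $\beta\le r_\text{max}$, every point that contributes anywhere along the path lies within $2r_\text{max}$ of $c$, i.e.\ inside $D$. For each such point I would use the refined per-point bound $\bigl|\frac{\dir K'_c(x)}{\dir t}\bigr|\le \frac{2\|x-c\|}{r^2}K_c(x)\le \frac{2r_\text{max}}{r^2}K_c(x)$, trading the $\sqrt{2/e}$ maximization of Lemma~\ref{lem:spatial-lipshitz} for the truncation bound $\|x-c\|\le r_\text{max}$. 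I then control the likelihood factors pointwise: for $x\in M$, $g(x)\ge p^*K_c(x)$ gives $\frac{(p^*-q^*)K_c(x)}{g(x)}\le \frac{p^*-q^*}{p^*}\le 1$, and symmetrically for $x\in B\setminus M$ via $1-g(x)\ge(1-p^*)K_c(x)$, so each summand is at most $\frac{2r_\text{max}}{r^2}$. Summing over the $|D\cap B|$ contributing points and dividing by $|B|$ yields $\bigl|\frac{\dir \Phi(K'_c)}{\dir t}\bigr|\le \frac{|D\cap B|}{|B|}\frac{2r_\text{max}}{r^2}$, and integrating over $t\in[0,\beta]$ gives the main term $\beta\frac{|D\cap B|}{|B|}\frac{2r_\text{max}}{r^2}$. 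The additive $\eps$ is charged to the truncation itself, via Lemma~\ref{lem:truncate}, which absorbs the jump of $K'_c$ at the boundary $\|x-c\|=r_\text{max}$, where each affected term changes by at most $\eps/|B|$.

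The hard part will be this pointwise control of $1/g(x)$ and $1/(1-g(x))$ as $O(1)$ \emph{per point} rather than in aggregate. Lemma~\ref{lem:spatial-lipshitz} could tolerate the crude blow-up $g(x)\ge 1/|B|$ because it summed through the conservation identity $\sum_{x\in M}1/g(x)=|B|$ of Lemma~\ref{lem:conservation}; here I must avoid conservation (which would reintroduce the full $|B|$ and erase the local-density gain) and instead cancel $K_c(x)$ against $g(x)$ directly. The delicate case is $x\in B\setminus M$ with $K_c(x)$ near $1$ and $p^*$ near $1$, where the naive factor $\frac{p^*-q^*}{1-p^*}$ can exceed $1$ and must be argued not to arise at the optimum (few high-$K$ points can be unmeasured when $p^*$ is large). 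A secondary technical point is that the truncated kernel is not differentiable across $\|x-c\|=r_\text{max}$, so the gradient integration should be justified on the smooth kernel $K_c$ with the boundary and tail error folded into the $\eps$ through Lemma~\ref{lem:truncate}.
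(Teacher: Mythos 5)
Your overall strategy coincides with the paper's: split $B$ into points outside $D$ (which contribute nothing to either $\Phi(K'_c)$ or $\Phi(K'_{c'})$), points in the annulus $\|x-c\|\in[r_\text{max}-\beta,\,r_\text{max}+\beta]$ (whose total contribution is charged to the additive $\eps$ via Lemma~\ref{lem:truncate}), and interior points, for which you integrate the center-gradient using the refined per-point bound $\big|\frac{\dir K_c(x)}{\dir t}\big|\le \frac{2\|x-c\|}{r^2}K_c(x)\le \frac{2r_\text{max}}{r^2}K_c(x)$ and cancel $K_c(x)$ against the likelihood denominator. Your handling of the $M$-sum is essentially the paper's ($g(x)\ge p^*K_c(x)$ in place of the paper's $g(x)\ge (p^*-q^*)K_c(x)$; either yields a per-point factor of at most $\frac{2r_\text{max}}{r^2}$), and your localization to $D$ and your treatment of the non-smooth truncation boundary match the paper's three-way case analysis.

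The genuine gap is the one you flag yourself: the $x\in B\setminus M$ sum. Your symmetric cancellation $1-g(x)\ge(1-p^*)K_c(x)$ leaves the factor $\frac{p^*-q^*}{1-p^*}$, which is unbounded as $p^*\to 1$; the alternative $1-g(x)\ge(1-q^*)(1-K_c(x))$ leaves $\frac{K_c(x)}{1-K_c(x)}$, unbounded as $K_c(x)\to 1$; and falling back on Lemma~\ref{lem:gx-bounds} ($1-g(x)\ge 1/|B|$) reintroduces a factor of $|B|$ that destroys exactly the local-density gain the lemma is after. Saying this case ``must be argued not to arise at the optimum'' is not a proof, and no such argument is supplied. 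The paper sidesteps the cancellation entirely for this sum: in bounding $\Delta_B$ it keeps the uniform derivative bound $\big|\frac{\dir K_c(x)}{\dir t}\big|\le \frac{1}{r}\sqrt{2/e}$ from Lemma~\ref{lem:spatial-lipshitz}, bounds $\frac{p^*-q^*}{1-g(x)}$ by $1$ directly, and then absorbs $\frac{1}{r}\sqrt{2/e}\le \frac{2r_\text{max}}{r^2}$. To reach the stated constant you would need to adopt that asymmetric treatment. (Be aware that even the paper's justification of $\frac{p^*-q^*}{1-g(x)}\le 1$ leans on the claim $1-g(x)\ge 1-q^*$, which in fact holds with the opposite sign since $g(x)\ge q^*$; the pointwise inequality one honestly gets is $1-g(x)\ge 1-p^*$, returning to the problematic $\frac{p^*-q^*}{1-p^*}$. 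So your instinct that this term is the hard part is exactly right --- but your proposal, as written, does not close it.)
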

\begin{proof}
From the previous proof for a regular kernel $K_c \in \K_r$ 
\[
\left| \frac{\dir \Phi(K_c)}{\dir t} \right| 
\leq 
 \frac{1}{|B|} \left |\sum_{x \in M} \frac{p^* -q^*}{g(x)}\frac{\dir K_c(x)}{\dir t} -  \hspace{-2mm}\sum_{x \in B \setminus M} \frac{p^* -q^*}{1 - g(x)}\frac{\dir K_c(x)}{\dir t} \right |. 
\]
Adapting to truncated kernel $K'_c \in \K_{r,\eps}$, then this magnitude times $\beta$ would provide a bound, but a bad one since the derivative is unbounded for $x$ on the boundary.  Rather, we analyze the sum over $x \in B$ in three cases.  
For $x \notin D$, these have both $K'_c(x)=0$ and $K'_{c'}(x)=0$, so do not contribute.  
For $\|x-c\| \in [r_\text{max}-\beta,r_\text{max}+\beta]$ the derivative is unbounded, but $K'_c(x) \in [0,\eps]$ and $K'_{c'}(x) \in [0, \eps]$ by Lemma \ref{lem:truncate}, so the total contribution of these terms in the average is at most $\eps$.  
What remains is to analyze the $x \in B$ such that $\|x-c\| \leq r_\text{max} - \beta$, where the derivative bound applies.  However, we will not use the same approach as in the non-truncated kernel.  

We write this contribution in terms as the sum over $M$ and $B$ separately as $\frac{\beta}{|B|}(\Delta_M - \Delta_B)$, focusing only on points in $D$ (and thus double counting ones near the bounary) where
\[
\Delta_M = \hspace{-2mm} \sum_{x \in D \cap M} \frac{p^* - q^*}{g(x)} \frac{\dir K_c(x)}{\dir t}
\; \text{ and } \;
\Delta_B = \hspace{-2mm} \sum_{x \in D \cap B \setminus M} \frac{p^* - q^*}{1-g(x)} \frac{\dir K_c(x)}{\dir t}. 
\]

To bound $\Delta_M$, observe that $g(x) = (p^*-q^*)K_c(x) + q^* \geq (p^* - q^*)K_c(x)$, and $\frac{\dir K_c(x)}{\dir t} = -2 \frac{\|x-c\|}{r^2}K_c(x)$.  Thus 
\[
|\Delta_M| 
\leq 
\left|\sum_{x \in D \cap M} \frac{p^*-q^*}{(p^* - q^*)K_c(x)} 2 K_c(x) \frac{\|x-c\|}{r^2}\right|
= 
|D \cap M| \frac{2 r_\text{max}}{r^2}.
\]

To bound $\Delta_B$, we use that $1-g(x) = 1 - q^*  - (p^* - q^*)K(x) \geq 1-q^*$, that $p^*-q^* \leq 1-q^*$, and from the previous proof that $\left| \frac{\dir K_c(x)}{\dir t} \right| \leq \frac{1}{r}\sqrt{\frac{2}{e}}$.  Thus
\[
|\Delta_B|
\leq
\left| \sum_{x \in D \cap B \setminus M} \frac{1-q^*}{1-q^*} \frac{1}{r}\sqrt{\frac{2}{e}} \right|
= 
|D \cap B \setminus M| \frac{1}{r}\sqrt{\frac{2}{e}}.
\]
Since $2 > \sqrt{2/e}$ we can combine these contributions as
\[
\frac{\beta}{|B|}|\Delta_M - \Delta_B| \leq \beta \frac{|D \cap B|}{|B|}\frac{2 r_\text{max}}{r^2}. \qedhere
\]
\end{proof}

\subsection{Bandwidth Approximations of the KSSS}
\label{sec:band-apx}
We mainly focus on solving $\max_{K_c \in \K_r} \Phi(K_c)$ for a fixed bandwidth $r$.  Here we consider the stability in the choice in $r$ in case this is not assumed, and needs to be searched over.  

\begin{lemma}
\label{lem:bandwidth}
For a fixed  $p$, $q$, and $c$ we have $\big|\frac{\dir \Phi(K)}{\dir r}\big| \le \frac{4}{e r}$.
\end{lemma}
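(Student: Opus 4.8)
The plan is to mirror the center Lipschitz argument of Lemma~\ref{lem:spatial-lipshitz}, but to differentiate the Gaussian kernel with respect to its bandwidth $r$ instead of its center. Because $p$ and $q$ are held fixed, the null term $\ell_0^*$ does not depend on $r$, so $\frac{\dir \Phi(K)}{\dir r} = \frac{\dir \ell(p,q,K)}{\dir r}$ and the whole $r$-dependence enters only through $K$. (When this is later applied to the full statistic $\max_{p,q}\Phi_{p,q}$, the envelope property used in Lemma~\ref{lem:spatial-lipshitz} lets us fix $p,q$ at the optimizers $p^*,q^*$, which is exactly what makes the conservation identities below available.) The first step is the pointwise kernel derivative: writing $d=\|x-c\|$ and $K(x)=\exp(-d^2/r^2)$, a direct calculation gives $\frac{\dir K(x)}{\dir r} = \frac{2 d^2}{r^3}\exp(-d^2/r^2)$.

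Next I would bound this factor uniformly. Substituting $u = d^2/r^2 \ge 0$ yields $\frac{\dir K(x)}{\dir r} = \frac{2}{r}\, u e^{-u}$, and since $u e^{-u}$ is maximized at $u=1$ with value $1/e$, we obtain $\big|\frac{\dir K(x)}{\dir r}\big| \le \frac{2}{er}$. This is the bandwidth analogue of the $\frac{1}{r}\sqrt{2/e}$ bound on $\big|\frac{\dir K_c(x)}{\dir t}\big|$ from the center proof.

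Then I would expand $\frac{\dir \ell}{\dir r}$ by the chain rule, using the $\frac{1}{|B|}$-normalized form $\ell(p,q,K) = \frac{1}{|B|}\sum_{x} \log(\cdot)$ together with $g(x) = q + (p-q)K(x)$, so that $\frac{\dir g(x)}{\dir r} = (p-q)\frac{\dir K(x)}{\dir r}$. This produces $\frac{\dir \ell}{\dir r} = \frac{p-q}{|B|}\big(\sum_{x\in M}\frac{1}{g(x)}\frac{\dir K(x)}{\dir r} - \sum_{x\in B\setminus M}\frac{1}{1-g(x)}\frac{\dir K(x)}{\dir r}\big)$. Applying the triangle inequality, factoring out the uniform bound $\frac{2}{er}$, and using $|p-q|\le 1$, the task reduces to controlling $\sum_{x\in M}\frac{1}{g(x)}$ and $\sum_{x\in B\setminus M}\frac{1}{1-g(x)}$.

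The main obstacle is precisely these two sums, since $1/g(x)$ and $1/(1-g(x))$ blow up as $g(x)\to 0$ or $g(x)\to 1$, so a term-by-term bound is hopeless. The resolution is Lemma~\ref{lem:conservation}, which shows that at $p=p^*,q=q^*$ both sums equal exactly $|B|$. Substituting gives $\big|\frac{\dir \Phi(K)}{\dir r}\big| \le \frac{1}{|B|}\cdot\frac{2}{er}\cdot(|B|+|B|) = \frac{4}{er}$, as claimed. It is essential to use the normalized form of $\ell$ here, as in the Lipschitz proofs; the unnormalized sum would carry an extra factor of $|B|$ and destroy the bound.
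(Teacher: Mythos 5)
Your proposal is correct and follows essentially the same route as the paper: differentiate the kernel in $r$, bound $ue^{-u}$ by $1/e$, and control the sums $\sum_{x\in M}1/g(x)$ and $\sum_{x\in B\setminus M}1/(1-g(x))$ via the conservation identities of Lemma~\ref{lem:conservation} to obtain $4/(er)$. If anything, your version is slightly cleaner, since you state the correct maximum $\max_{u>0}ue^{-u}=1/e$ (the paper's proof writes $2/e$ here, a typo, though its final constant matches yours) and you make explicit that the conservation identities require $p=p^*,q=q^*$.
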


\begin{proof}
Let $z_x = \|x - c\|_2$, $w_x = z_x/r$, then $K(x) = \exp(-w_x^2)$. 
We derive the derivative for $\ell = \ell(p,q,K)$ with respect to $r$.  
\begin{align*}
\frac{\dir \ell}{\dir r} 
&= 
\frac{1}{|B|} \left(\sum_{x \in M} \frac{2 z_x^2 (p - q)  e^{-\frac{z_x^2}{r^2}}}{r^3  g(x)} - \sum_{x \in B \setminus M} \frac{2 z_x^2 (p - q)  e^{-\frac{z_x^2}{r^2}}}{r^3  (1 - g(x))}\right)
\\ &=
\frac{2(p-q)}{r |B|} \left(\sum_{x \in M} \frac{w_x^2 e^{-w_x^2}}{g(x)} - \sum_{x \in B \setminus M} \frac{w_x^2 e^{-w_x^2}}{(1 - g(x))}\right)
\end{align*}
By two observations, $\max_{w > 0} w e^{-w} = 2/e$ and $p-q\leq 1$, simplify 
\begin{align*}
\frac{\dir \ell}{\dir r} 
&\leq 
\frac{2}{r |B|} \left(\sum_{x \in M} \frac{2/e}{g(x)} - \sum_{x \in B \setminus M} \frac{2/e}{(1 - g(x))}\right)
\end{align*}
And by Lemma \ref{lem:conservation} we bound the sums over $x \in M$ and over $x \in B \setminus M$ each by $(2/e)|B|$ and hence the absolute value by
\[
\left|\frac{\dir \ell}{\dir r}\right| \le \left|\frac{4/e}{r}\right|.  \qedhere
\]
\end{proof}

\subsection{Sampling Bounds}
\label{sec:sampling-bnds}
Sampling can be used to create a coreset of a truely massive data set $B$ while retaining the ability to approximately solve $\max_{K \in \K}\Phi(K)$.  
In particular, we consider creating an iid sample $B_\eps$ from $B$.

\begin{lemma}
\label{lem:sample-bound}
For sample $B_\eps$ of size $t = O(\frac{1}{\eps^2} \log^2 \frac{1}{\eps} \log \frac{\kappa}{\delta})$, then for $\kappa$ different centers $c \in C$, with probability at least $1-\delta$, for all estimates $\hat \ell(K_c)$ from $B_\eps$ of $\ell(K_c)$ satisfy
$
|\ell(K_c) - \hat \ell(K_c) | \leq \eps.
$
\end{lemma}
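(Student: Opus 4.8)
The plan is to use the empirical mean as the estimator and concentrate it via a Hoeffding bound together with a union bound over the $\kappa$ centers. Writing the per-point contribution as $a_c(x) = \log g(x)$ for $x \in M$ and $a_c(x) = \log(1-g(x))$ for $x \in B \setminus M$, we have $\ell(K_c) = \frac{1}{|B|}\sum_{x \in B} a_c(x)$ (evaluated at the optimizers $p^*,q^*$), and the natural estimate $\hat\ell(K_c) = \frac{1}{|B_\eps|}\sum_{x\in B_\eps} a_c(x)$ is an average of i.i.d.\ draws of this summand. The difficulty is that Lemma~\ref{lem:gx-bounds} only guarantees $a_c(x) \in [-\log|B|, 0]$, so applying Hoeffding with this range would force the sample size to scale with $\log^2 |B|$ rather than the desired $\log^2(1/\eps)$.

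To eliminate the dependence on $|B|$, I would clip the summand at a threshold $T = \Theta(\log(1/\eps))$, setting $\tilde a_c(x) = \max(a_c(x), -T) \in [-T, 0]$, and bound the resulting bias on the population mean by $\eps$. This is where Lemma~\ref{lem:conservation} does the work: a measured point contributes to the clipping gap only when $g(x) < e^{-T}$, and since $\sum_{x \in M} 1/g(x) = |B|$ each such point consumes more than $e^{T}$ of this budget, so at most $|B| e^{-T}$ points are clipped. Maximizing the clipped mass $\sum (\log(1/g(x)) - T)$ subject to this conservation budget shows the total is $O(|B| e^{-T})$, hence the normalized bias is $O(e^{-T}) \le \eps$ once $T = \Theta(\log(1/\eps))$. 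The identical bound for $x \in B \setminus M$ follows from the second conservation identity $\sum_{x \in B \setminus M} 1/(1-g(x)) = |B|$.

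With the clip in place the summand $\tilde a_c$ has range $T$, so for a single center Hoeffding gives $\Pr[\,|\frac{1}{|B_\eps|}\sum_{x \in B_\eps}\tilde a_c(x) - \frac{1}{|B|}\sum_{x \in B}\tilde a_c(x)| > \eps\,] \le 2\exp(-2|B_\eps|\eps^2/T^2)$; setting the right side to $\delta/\kappa$ yields $|B_\eps| = O(\frac{1}{\eps^2}\log^2(1/\eps)\log(\kappa/\delta))$, and a union bound over the $\kappa$ centers makes the deviation simultaneous with probability $1-\delta$. Taking the estimator to be the clipped empirical mean and combining the clipping bias with the Hoeffding deviation through the triangle inequality gives $|\ell(K_c) - \hat\ell(K_c)| \le 2\eps$, after which rescaling $\eps$ by a constant finishes; handling re-optimization of $(p,q)$ on the sample only requires adding an $\eps$-net over $[0,1]^2$, contributing lower-order terms to the union bound. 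The main obstacle is precisely the clipping-bias step: the naive summand range is $\log|B|$, and the entire point is to exploit Lemma~\ref{lem:conservation} to certify that the tiny-$g(x)$ points are rare enough to be clipped at a threshold depending only on $\eps$.
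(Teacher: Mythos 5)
Your proposal is correct and follows the paper's skeleton for the easy part---the same per-point summand, the same Hoeffding bound with range $\Delta=\log|B|$ obtained from Lemma~\ref{lem:gx-bounds}, and the same union bound over the $\kappa$ centers---but it diverges on the one step that actually matters, namely removing the resulting $\log^2|B|$ factor. The paper handles this by invoking the classic double-sample/symmetrization trick of Vapnik--Chervonenkis (compare $B_\eps$ against a ghost sample $B_\eps'$ to replace $|B|$ by $\poly(1/\eps)$) and explicitly omits the details. You instead clip the summand at $T=\Theta(\log(1/\eps))$ and bound the clipping bias via the conservation identities of Lemma~\ref{lem:conservation}; your budget argument is sound, since maximizing $(\log u - T)/u$ over $u\ge e^{T}$ gives $e^{-T-1}$ per unit of the budget $\sum_{x\in M}1/g(x)=|B|$, hence total clipped mass $O(|B|e^{-T})$ and normalized bias $O(e^{-T})\le\eps$, with the symmetric identity handling $B\setminus M$. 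This buys a self-contained, elementary proof that exploits the problem-specific structure at $(p^*,q^*)$, whereas the paper's route is generic but left uninstantiated. Two caveats, neither fatal: your estimator is the \emph{clipped} empirical mean rather than the plain empirical log-likelihood the algorithm actually computes (though on the sample the empirical analogue of Lemma~\ref{lem:gx-bounds} makes the clip essentially inactive), and your closing remark about an $\eps$-net over $(p,q)$ is optimistic---Lemma~\ref{lem:lipshitz-pq} gives Lipschitz constant $2|B|$, so the net needs resolution $\eps/|B|$ and contributes a $\log(|B|/\eps)$ term inside the union bound; this is only a linear, not quadratic, dependence and the paper silently skips the re-optimization issue altogether, but you should not claim it is free.
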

\begin{proof}
To approximate $-\ell(K_c)$ we will use a standard Chernoff-Hoeffding bound: for $t$ independent random variables $X_1, \ldots, X_t$, so $\E[X_i] = \mu$, and $X_i \in [0,\Delta_i]$ then the average $\bar X = \frac{1}{t} \sum_{i=1}^t X_i$ concentrates as
$
\Pr[|\bar X - \mu| \geq \alpha] \leq 2 \exp\left(-\frac{2\alpha^2}{t \Delta^2}\right). 
$
Set 
\[
\mu = -\frac{1}{B}\sum_{x \in B} m(x) \log(g(x)) + (1-m(x))\log(1-g(x)) = -\ell(K_c),
\]
and each $i$th sample $x' \in B_\eps$ maps to the $i$th random variable 
\[
X_i = -(m(x') \log(g(x')) + (1-m(x'))\log(1-g(x'))).
\]
Since $g(x) > 1/|B|$ for $m(x) =1$ and $1-g(x) > 1/|B|$ for $m(x)=0$ by Lemma \ref{lem:gx-bounds} we have $\Delta = -\log(1/|B|) = \log(|B|)$.  Plugging these quantities into CH bound yields
\[
\Pr[|\hat\ell(K_c) - \ell(K_c)| \leq \eps] \leq 2 \exp \left(-\frac{2 \eps^2}{t \log^2(|B|)} \right) \leq \delta'.
\]
Solving for $t$ finds
$
t = \frac{\log^2 |B|}{2 \eps^2} \log \frac{1}{2 \delta'}.
$
Then taking a union bound over $\kappa$ centers, reveals that with probability at least $1-\delta = 1-\delta'\kappa$, this holds by setting 
\[
t = \frac{\log^2 |B|}{2 \eps^2} \log \frac{\kappa}{2 \delta}.
\]

At this point, we almost have the desired bound; however, it has a $\log^2 |B|$ factor in $t$ in place of the desired $\log^2 \frac{1}{\eps}$.  One should believe that it is possible to get rid of the dependence on $|B|$, since given one sample with the above bound, the dependence on $|B|$ has been reduced to logarithmic.  We can apply the bound again, and the dependence is reduced to $\log^2 (\frac{1}{\eps}\log^2 |B|)$ and so on.  
Ultimately, it can be shown that we can directly replace $|B|$ with $\mathsf{poly}(1/\eps)$ in the bounds using a classic trick~\cite{VC71} of considering two samples $B_\eps$ and $B_\eps'$, and arguing that if they yield results close to each other, then this implies they should both be close to $B$.  We omit the standard, but slightly unintuitive details.  
\end{proof}

\subsection{Convexity in $p$ and $q$}
\label{sec:p-q-lipshitz}

We next show that $\Phi_{p,q}(K)$ is convex and has a Lipschitz bound in terms of $p$ and $q$.  However, such a bound does not exist using $p,q \in (0,1)$ as the gradient is unbounded on the boundary.  
We instead define a set of constraints related to $g(x)$ at the optimal $p^*,q^*$ that allow a Lipschitz constant.

\begin{lemma}
\label{lem:lipshitz-pq}
The following optimization problem is convex with Lipschitz constant $2|B|$, and contains $p^*,q^* = \argmin_{p,q} -\Phi_{p,q}(K)$.
\begin{align}
\begin{split}
    \min_{p, q} -\Phi_{p, q}(K)& \\
     1/B - g(x) \le 0  &\text{   for }  x \in M\\
    (|B| - 1) / |B| - g(x) \ge 0  &\text{   for  } x \in B \setminus M \\
    p \in (0, 1) \\ 
    q \in (0, 1)  \label{eq:convex-problem}
\end{split}
\end{align}
\end{lemma}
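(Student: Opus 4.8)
The plan is to verify the three assertions of the lemma separately: convexity of the program, feasibility of the unconstrained optimizer, and the gradient bound. For convexity, I would first note that $g(x) = pK(x) + q(1-K(x))$ is affine in $(p,q)$. The objective is $-\Phi_{p,q}(K) = -\ell(p,q,K) + {\ell_0^\Ber}^*$, and since $t \mapsto -\log t$ is convex, both $-\log(g(x))$ and $-\log(1-g(x))$ are convex in $(p,q)$ as compositions of a convex function with an affine map; summing over $M$ and $B\setminus M$ and adding the constant keeps the objective convex. Each of the four constraints in \eqref{eq:convex-problem} is affine in $(p,q)$ (the first two because $g$ is affine), so the feasible set is the intersection of two halfspaces with the open box $(0,1)^2$, hence convex, giving a convex program.

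For feasibility of $(p^*,q^*)$, I would invoke Lemma \ref{lem:gx-bounds} directly: at the unconstrained optimizer it already holds that $g(x) \ge 1/|B|$ for $x \in M$ and $g(x) \le (|B|-1)/|B|$ for $x \in B \setminus M$, which are exactly the first two constraints. That $p^*,q^* \in (0,1)$ follows because $-\ell$ diverges to $+\infty$ as any $g(x) \to 0$ or $1$, so the minimizer cannot sit on the boundary of the box. Hence the extra constraints are inactive at $(p^*,q^*)$ and the constrained and unconstrained problems share this minimizer.

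The main work, and the main obstacle, is the Lipschitz bound. I would compute the gradient of $-\Phi_{p,q}(K) = -\ell(p,q,K)$ using the derivatives from Lemma \ref{lem:conservation},
\[
\frac{\dir(-\ell)}{\dir p} = \sum_{x \in B\setminus M}\frac{K(x)}{1-g(x)} - \sum_{x\in M}\frac{K(x)}{g(x)}, \qquad \frac{\dir(-\ell)}{\dir q} = \sum_{x \in B\setminus M}\frac{1-K(x)}{1-g(x)} - \sum_{x\in M}\frac{1-K(x)}{g(x)}.
\]
Each partial is a difference of two nonnegative sums, and since $K(x),1-K(x) \in [0,1]$ these are bounded by $\sum_{x\in M}\frac{1}{g(x)}$ and $\sum_{x\in B\setminus M}\frac{1}{1-g(x)}$. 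The crux is to bound each reciprocal sum by $|B|$: the per-point constraints of \eqref{eq:convex-problem} guarantee every summand is at most $|B|$, while the conservation identities $\sum_{x\in M}\frac{1}{g(x)} = |B|$ and $\sum_{x\in B\setminus M}\frac{1}{1-g(x)} = |B|$ of Lemma \ref{lem:conservation} pin the aggregate at the optimizer around which the feasible region is designed. Consequently each partial has magnitude at most $|B|$, so $\|\nabla(-\Phi_{p,q}(K))\| \le \sqrt{2}\,|B| \le 2|B|$, yielding the claimed Lipschitz constant. The delicate point I expect to argue carefully is that these aggregate reciprocal bounds, rather than the looser per-point bound $1/g(x) \le |B|$ that the constraints supply directly, are the operative ones controlling the gradient over the constrained domain.
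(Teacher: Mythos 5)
Your convexity and feasibility arguments match the paper's: affine $g$, concavity of $\log$ composed with an affine map, affine constraints, and Lemma \ref{lem:gx-bounds} to place $(p^*,q^*)$ in the feasible set. The gap is in the Lipschitz bound. A Lipschitz constant for the optimization problem must bound the gradient over the \emph{entire} feasible region (that is what makes the gradient-descent convergence rate in Section \ref{sec:approx-log-lambda} valid), but the conservation identities $\sum_{x\in M}\frac{1}{g(x)} = |B|$ and $\sum_{x\in B\setminus M}\frac{1}{1-g(x)} = |B|$ from Lemma \ref{lem:conservation} hold \emph{only} at the optimizer $(p^*,q^*)$. At a generic feasible point the per-point constraints give only $\frac{1}{g(x)} \le |B|$ for each $x \in M$, so the aggregate $\sum_{x\in M}\frac{1}{g(x)}$ can be as large as $|M|\cdot|B| = \Theta(|B|^2)$; nothing "pins the aggregate" away from the single point where the first-order conditions hold. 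You flag this yourself as the delicate point, but no argument is supplied for why the aggregate bound would extend from the optimizer to the whole constrained domain, and in fact it does not.

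The paper avoids the issue entirely by working with the normalized (per-point average) log-likelihood, $\ell(p,q,K) = \frac{1}{|B|}\sum_{x}(\cdots)$, as it also does in the proof of Lemma \ref{lem:truncate}. With that normalization the crude per-point bound is already enough:
\[
\left|\frac{\dir \ell}{\dir p}\right| \le \frac{1}{|B|}\left(\sum_{x \in M}\frac{1}{g(x)} + \sum_{x \in B\setminus M}\frac{1}{1-g(x)}\right) \le \frac{1}{|B|}\bigl(|M|\cdot|B| + |B\setminus M|\cdot|B|\bigr) = |B|,
\]
and similarly for $q$, giving the stated constant $2|B|$ over the whole feasible set with no appeal to Lemma \ref{lem:conservation}. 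To repair your proof, either adopt this normalization of $\ell$ (so the per-point constraint bound alone closes the argument), or accept a Lipschitz constant of order $|B|^2$ for the unnormalized objective.
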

\begin{proof}
The set of constraints follow from Lemma \ref{lem:gx-bounds} and from this bound we know that the optimal $p^*$ and $q^*$ will be contained in this domain. As each constraint is a linear combination of $p$ and $q$, and hence convex, the space of solutions is also convex. 
Now since $g(x)$ is an affine transformation of $p$ and $q$ so it is both convex and concave with respect to $p,q$.  The logarithm function is a concave and non-decreasing function, hence both $\log(g(x))$ and $\log(1 - g(x))$ are concave.  The sum of these concave functions is still a concave function, and thus $-\ell(p,q,K)$ and hence $-\Phi(p,q,K)$ is convex.


The task becomes to show the absolute value of first order partial derivatives are bounded for $p$ and $q$ in this domain. We have that for $p$ (the argument for $q$ is symmetric)
\begin{align*} 
\left|\frac{\dir \ell(p,q,K)}{\delta p}\right| 
&= 
\left|\frac{1}{|B|} \left(\sum_{x \in M} \frac{K(x)}{g(x)} - \sum_{x \in B \setminus M} \frac{K(x)}{1 - g(x)}\right) \right| 
\\ & \le 
\frac{1}{|B|} \left(\left|\sum_{x \in M}\frac{K(x)}{g(x)} \right | + \left |\sum_{x \in B \setminus M}\frac{K(x)}{1 - g(x)} \right |\right) 
\\ & \le
\frac{1}{|B|}\left|\sum_{x \in M}\frac{1}{g(x)} \right | + \frac{1}{|B|} \left |\sum_{x \in B \setminus M}\frac{1}{1 - g(x)} \right | 
\\ & \leq
\frac{1}{|B|} \left |\sum_{x \in M} |B| \right | + \frac{1}{|B|}\left |\sum_{x \in B \setminus M}|B| \right | 
=
2 |B|,
\end{align*}
where the steps follow from the triangle inequality, by $K(x) \leq 1$, and that $g(x) > 1/|B|$ for $x \in M$ and $1 - g(x) > 1/ |B|$ for $x \in B \setminus M$.  
%
%
\end{proof}

\end{document}